\documentclass[final, hidelinks, onefignum,onetabnum]{siamart250211}

\usepackage{bm}
\usepackage{subcaption}
\usepackage{amssymb, mathtools, amsmath}

\usepackage{comment}

\usepackage{cancel}
\usepackage{placeins}

\usepackage{lipsum}
\usepackage{amssymb}
\usepackage{amsfonts}
\usepackage{graphicx}
\usepackage{epstopdf}
\usepackage{algorithmic}
\ifpdf
  \DeclareGraphicsExtensions{.eps,.pdf,.png,.jpg}
\else
  \DeclareGraphicsExtensions{.eps}
\fi

\newsiamremark{remark}{Remark}
\newsiamremark{hypothesis}{Hypothesis}
\crefname{hypothesis}{Hypothesis}{Hypotheses}
\newsiamthm{claim}{Claim}
\newsiamremark{fact}{Fact}
\crefname{fact}{Fact}{Facts}

\headers{Operator learning for Fokker-Planck equations}{Li Zeng, Xiaoliang Wan, Yaobin Wang, Fabio Nobile and Tao Zhou}

\title{A transition-density-based operator learning method for Fokker-Planck equations with various initial conditions\thanks{Submitted to the editor's DATE.
\funding{
The first author was supported by the NSF of China (under grant 12401566). The second author was supported by NSF grant DMS-2513234. The last author was supported by the NSF of China (under grants 12288201 and 12461160275).
}}}

\author{
Li Zeng\thanks{School of Mathematics and Statistics, Fuzhou University, Fuzhou, China 
(\email{lzeng@fzu.edu.cn}).}
\and Xiaoliang Wan\thanks{Department of Mathematics and Center for Computation and Technology, Louisiana State University, Baton Rouge 70803, USA (\email{xlwan@lsu.edu}).}
\and Yaobin Wang\thanks{Faculty of Science and Technology, Beijing Normal-Hong Kong Baptist University, Zhuhai 519087, China (\email{wangyaobin@bnbu.edu.cn}).}
 \and Fabio Nobile\thanks{CSQI, \'Ecole Polytechnique F\'ed\'erale de Lausanne, Lausanne, Switzerland (\email{Fabio.Nobile@epfl.ch}).}
\and Tao Zhou\thanks{Institute of Computational Mathematics and Scientific/Engineering
Computing, Academy of Mathematics and Systems Science, Chinese Academy
of Sciences, Beijing, China (\email{tzhou@lsec.cc.ac.cn}).}}

\usepackage{amsopn}

\ifpdf
\hypersetup{
  pdftitle={Operator learning for Fokker-Planck equations},
  pdfauthor={Li Zeng, Xiaoliang Wan, Yaobin Wang, Fabio Nobile and Tao Zhou}
}
\fi
\usepackage{ulem}

\usepackage{tikz}

\makeatletter
\newcommand{\qsubsection}[2][]{%
  \begingroup\global\let\SIAM@oldpunct\@mypunct\endgroup
  \gdef\@mypunct{?\global\let\@mypunct\SIAM@oldpunct}%
  \if\relax\detokenize{#1}\relax
    \subsection{#2}%
  \else
    \subsection[#1]{#2}%
  \fi
}
\makeatother

\usepackage[percent]{overpic}
\begin{document}

\maketitle

\begin{abstract}
Solving Fokker-Planck equations (FPEs) for multiple initial conditions typically requires repeated computations, leading to substantial computational costs.
In this work, we propose a transition-density-based operator learning method to efficiently approximate the solution operator of FPEs with various initial conditions.
The core idea is to learn the transition probability density function (PDF) of the underlying stochastic differential equation (SDE), from which the solution associated with a new initial distribution can be obtained through the Chapman–Kolmogorov equation without retraining the model. A major challenge in learning the transition PDF lies in the singular behavior induced by the Dirac initial condition. To address it,  we introduce a conditional normalizing flow whose base distribution is given by the explicit transition PDF of a linearized SDE. This base distribution captures the short-time behavior of the target transition PDF and allows the normalizing flow to learn a near-identity transformation at small times. We further incorporate a time-weighted loss function to stabilize training near the initial time and develop an importance-sampling strategy for evaluating solutions associated with general initial conditions.  A variety of numerical experiments are presented to illustrate the effectiveness and robustness of the proposed method.
\end{abstract}

\begin{keywords}
Conditional Normalizing flow, Fokker-Planck equation, Operator learning
\end{keywords}

\begin{MSCcodes}
68T07, 62G07, 65M99

\end{MSCcodes}

\section{Introduction}\label{sec:introduction}

The FPE describes the time evolution of the PDF associated with a stochastic dynamical system of diffusion type, serving as a deterministic representation of the stochastic process. 
Numerically solving the FPE is of high interest in many fields, such as statistical physics, chemistry, biology, finance, and data science, as it enables uncertainty quantification and statistical inference. Moreover, in many of these applications, such as ensemble forecasting \cite{leutbecher2008ensemble} and data assimilation \cite{miller1999data}, the governing stochastic dynamic is fixed, whereas the initial distribution varies across scenarios or posterior updates. Solving the FPE repeatedly for each initial condition can therefore lead to substantial computational costs.

Classical numerical discretization schemes for the FPE include the finite element method \cite{deng2009finite}, the finite difference method \cite{kumar2006solution}, the path integral method \cite{wehner1983numerical}, and the Jordan-Kinderlehrer-Otto (JKO) scheme \cite{jordan1998variational}, among others. Nevertheless, these traditional approaches incur high computational cost and suffer from poor scalability in high-dimensional settings.
To overcome these challenges, deep learning methods \cite{ew2018deep, raissi2019physics, sirignano2018dgm} bring a new paradigm. In general, deep learning–based methods for solving the FPE can be categorized into two main classes: those that directly represent the solution with neural networks \cite{chen2021solving,  noh2025fpl,  wang2025tensor, wen2024coupling, xu2020solving, zhai2022deep}, and those that employ generative modeling techniques, such as normalizing flows \cite{elbekri2025flowkac, feng2022solving,  lu2022learning, TANG2022111080, zeng2023adaptive, zeng2023bounded} and score-based approaches \cite{boffi2023probability, hu2025score, zhou2025simulating}. Alternatively, Bruna et al. \cite{bruna2024neural, wen2024coupling} approximate the solution of the interacting-particle FPE using a Gaussian mixture model, and numerically solve the ordinary differential equation (ODE) system for the mixture coefficients using the Neural Galerkin method, subsequently normalizing the solution at each time step to enforce the PDF constraints.
In parallel, several techniques have been developed for some special FPEs that exploit the reinterpretation as the \(L^2\)-Wasserstein gradient flow of the Kullback-Leibler divergence \cite{lee2024deep, liu2022neural}. 

While a variety of numerical methods have shown promising performance in solving FPEs with prescribed initial conditions, there remains a lack of methods capable of efficiently handling multiple initial conditions. 
The aforementioned methods have to solve the FPE anew every time a new initial condition arrives. This challenge can be viewed from the perspective of operator learning, where the objective is to learn a mapping from an initial distribution to the corresponding time-dependent solution of the FPE. Recently, Wang et al. \cite{wang2026deep} proposed a Gaussian-mixture latent surrogate that directly learns the evolution from initial to corresponding transient PDF representation, while focusing on Gaussian-mixture initial distributions and a truncated computational domain.
Yang et al. \cite{yang2024pseudoreversible} proposed a pseudoreversible normalizing flow and leveraged samples generated from an SDE to learn the terminal solution of the corresponding transition FPE. 
In the general data-driven setting, generative models are also widely used to learn the conditional PDF,  such as conditional Glow \cite{lu2020structured}, monotone generative adversarial networks \cite{baptista2024conditional}, the gradient of a partially input-convex neural network, and Neural ODE \cite{wang2025efficient}, to name a few.
For general parametric PDEs, some techniques have been proposed to learn the solution operator, such as the deep operator network (DeepONet) \cite{lu2021learning}, the Fourier neural operator (FNO) \cite{lifourier}, and their variants. Gaby et al. \cite{gaby2024neural} learned the solution operator of the evolution PDEs for various initial conditions by learning a control vector field in the parameter space and solving the ODE system of the neural network parameters. There are also works that learn the Green’s function to construct the solution operator, leveraging the explicit representation of the solution in terms of the Green’s function \cite{aldirany2024operator,  boulle2022learning, teng2022learning}. However, direct application of these methods to FPEs requires additional treatments of boundary conditions and PDF constraints.
In this work, we take a different route: rather than learning the dependence of the FPE solution on the initial distribution directly, we learn the transition PDF of the underlying stochastic dynamics. Solutions for new initial distributions are then recovered through the Chapman-Kolmogorov equation. Our framework is built on the following ideas.

First, we reformulate operator learning for the FPE as the problem of approximating the transition PDF, i.e., the solution of the FPE starting at initial time from a Dirac mass at an arbitrary location \(\bm{x}_0\).
We develop a conditional normalizing flow parametric in time and \(\bm{x}_0\) to learn the transition PDF. Benefiting from the inherent property of a normalizing flow, the proposed approach naturally satisfies the PDF constraints without requiring additional enforcement. We mention that the work \cite{fabioTPDF, saporiti2026neural} developed in parallel to ours, also proposes a conditional normalizing flow to approximate the transition PDF, however, in a Neural Galerkin framework.

Second, we propose a time- and initial state-dependent base distribution to improve the conditioning of the conditional normalizing flow. In contrast to the commonly used standard normal distribution, our base distribution is given by the explicit solution of a linearized SDE parametric with respect to the initial state \(\bm{x}_0\). The linearized SDE is obtained by applying a first-order Taylor expansion to the drift term and a zero-order expansion to the diffusion term of the target SDE. 
Using the solution of the linearized SDE as the base random process not only respects the causality for evolution PDEs \cite{daw2023mitigating} while bypassing the singular initial condition, but also alleviates training difficulties. We show, in particular, that the total variation distance between the transition PDF of the linearized process and that of the true process is of order \(\mathcal{O}(t)\)  while the Kullback–Leibler divergence scales as \(\mathcal{O}(t^2)\) as \(t\to 0\). The corresponding transformation tends to the identity map as \(t\to 0\), making it easier to learn than transformation based on a fixed standard normal base distribution at all times.

Third, we propose a time-weighted loss function to mitigate numerical instabilities arising at small times when the transition PDF approaches a Dirac measure. 
The weight function takes the form \(t^{d+2}\) or \(t^{\frac{d}{2}+2}\) depending on the distribution of the training points. 
It is designed to capture the temporal scaling of the residual when a small perturbation around the base PDF is considered. Introducing this weighting strategy can be interpreted as a trade-off between maintaining causality and alleviating optimization difficulties.


Fourth, we incorporate the learned conditional normalizing flow and an importance-sampling Monte Carlo estimator 
to approximate the final solution of the original FPE.

The remainder of this paper is organized as follows. \Cref{sec:problem_setup} formalizes the operator learning problem for the FPE under different initial conditions. 
In \cref{sec:conditional_normalizing_flow}, we introduce the conditional normalizing flow with base distribution provided by the linearized SDE. 
In \cref{sec:deep_adaptive_algorithm}, we analyze the behavior of the loss function as \(t\to 0\), and we propose a time-weighted loss and an adaptive sampling algorithm. Importance sampling is discussed in \cref{sec:IS} to efficiently evaluate \(p(\bm{x}, t)\). \Cref{sec:experiments} presents a series of numerical experiments that illustrate the effectiveness and robustness of the proposed method, followed by concluding remarks in \cref{sec:conclusion}.
Additional details of certain proofs and calculations are provided in the Appendix.

\section{Problem setup and reformulation}\label{sec:problem_setup}

Consider the state variable $\bm{X}_t$ modeled by the following non-degenerate SDE,
\begin{equation}
\mathrm{d}\bm {X} _{t}={\boldsymbol {f }}(\bm {X} _{t})\,\mathrm{d}t+{\boldsymbol {g }}(\bm {X} _{t})\,\mathrm{d}\bm {W} _{t}, \quad t>0,
\label{sde}
\end{equation}
where $\bm{f}(\bm {X}_{t}):\mathbb{R}^d\to \mathbb{R}^d$ and $\bm{g}(\bm{X}_t):\mathbb{R}^d\to\mathbb{R}^{d\times m}, m\geq d$ denote the drift and diffusion terms, respectively, and $\bm{W}_t$ is an $m$-dimensional standard Wiener process. The PDF $p(\bm{x} ,t)$ for $\bm{X}_{t}$ satisfies the corresponding time-dependent FPE:
\begin{equation}
{\frac {\partial p}{\partial t}}=\mathcal{L}^*_{\bm{f},\bm{g}}p\coloneqq -\nabla\cdot(p\bm{f})+\frac{1}{2}\nabla\cdot\nabla\cdot(\bm{g}\bm{g}^{\rm T}p).
\label{eqn:fp}
\end{equation}
In general, equation \eqref{eqn:fp} is defined on $\mathbb{R}^d$ with the following boundary condition
\begin{equation}\label{BCs}
p(\bm{x})\to 0 \quad \mbox{as} \quad | \bm{x}|_2 \to \infty.
\end{equation}
Meanwhile, the solution as a probability density function should be conservative and non-negative, i.e.,
\begin{equation}\label{density_constraint}
\int_{\mathbb{R}^d}p(\bm{x},t)\mathrm{d}\bm{x}\equiv 1, \quad \mbox{and} \quad p(\bm{x},t)\geq 0, \quad \forall\, t\geq 0.
\end{equation}

In this work, we address scenarios with varying initial conditions and develop a transition-density-based operator learning method that generalizes across them, thereby eliminating the need to solve the corresponding FPE repeatedly for different initial conditions, i.e., we consider the problem
\begin{equation}
    \label{eqn:tFK}
    \left\{
    \begin{aligned}
&\frac{\partial p(\bm{x},t)}{\partial t}=\mathcal{L}^*_{\bm{f},\bm{g}}p(\bm{x},t),&\bm{x}\in\mathbb{R}^d, t>0,\\
&p(\bm{x},0)=p_0(\bm{x}), \quad p_0\in\mathcal{G},&\bm{x}\in\mathbb{R}^d
\end{aligned}
\right.
\end{equation}
where \(\mathcal{G}\) is a functional class including all possible initial distributions of interest.

To achieve this, we turn to the transition PDF \(p(\bm{x},t|\bm{x}_0)\) given an initial state. The corresponding transition PDF satisfies the same governing equation as the original FPE \eqref{eqn:tFK}, but with a Dirac Delta function as the initial condition, 
\begin{equation}
  \label{eqn:tSDE}
  \left\{
  \begin{aligned}
    &\frac{\partial p(\bm{x},t|\bm{x}_0)}{\partial t}=\mathcal{L}^*_{\bm{f},\bm{g}}p(\bm{x},t|\bm{x}_0), &\bm{x}\in\mathbb{R}^d, t>0,\\
    &p(\bm{x},0|\bm{x}_0)=\delta(\bm{x}-\bm{x}_0), &\bm{x}\in\mathbb{R}^d.
  \end{aligned}
  \right.
  \end{equation}
Applying the Chapman-Kolmogorov equation yields the solution to the FPE \eqref{eqn:tFK},
\begin{equation}
p(\bm{x},t)=\int_{\Omega_0}p(\bm{x},t|\bm{x}_0)p_0(\bm{x}_0)\mathrm{d}\bm{x}_0.
\label{eq:p_int}
\end{equation}
This motivates us to solve the FPE for different initial conditions by approximating the transition PDF \(p(\bm{x}|\bm{x}_0,t)\). 
In this work, we extend the normalizing flow framework \cite{feng2022solving, zeng2023adaptive} to handle the transition FPE with Dirac delta functions centered at various points as initial conditions.

\subsection{Notation}
For convenience, we introduce here some notations that will be used throughout the paper.
For a vector \(x \in \mathbb{R}^d\), \(|x|\) denotes its Euclidean norm. 
For a matrix \(A \in \mathbb{R}^{d \times d}\), \(\|A\|\) denotes its spectral norm induced by the Euclidean norm.
Likewise, for a bilinear map \(B : \mathbb{R}^d \times \mathbb{R}^d \to \mathbb{R}^d\), \(\|B\|\) denotes its operator norm
\(
\|B\|\coloneqq \sup\limits_{x,y \in \mathbb{R}^d,\; x \neq 0,\; y \neq 0}
\frac{|B(x,y)|}{|x|\,|y|}.
\)

Given two probability measures \(\nu\) and \(\mu\) on \(\mathbb{R}^d\), \(\nu \ll \mu\), \(H(\nu \mid \mu)\) denotes 
the relative entropy of \(\nu\) w.r.t. \(\mu\), and \(\|\nu - \mu\|_{\mathrm{TV}}\) denotes 
the total variation distance between $\nu$ and $\mu$.
\[
    H(\nu \mid \mu)\coloneqq 
 \int_{\mathbb{R}^d} \log \left(\frac{\mathrm{d}\nu}{\mathrm{d}\mu}\right)\, \mathrm{d}\nu,\quad
\|\nu - \mu\|_{\mathrm{TV}}
\coloneqq \sup_{A \in \mathcal{B}(\mathbb{R}^d)} |\nu(A)-\mu(A)|.
\]  
Furthermore, if \(\mu\ll\rho\), and \(\mathrm{d}\nu=p_{\nu}\mathrm{d}\rho\), \(\mathrm{d}\mu=p_{\mu}\mathrm{d}\rho\), then
\[
    H(\nu \mid \mu)= \int_{\mathbb{R}^d} p_{\nu}(\bm{x})\log \left(\frac{p_{\nu}(\bm{x})}{p_\mu(\bm{x})}\right)\, \mathrm{d}\rho(\bm{x}), \quad \|\nu - \mu\|_{\mathrm{TV}}=\frac{1}{2}\int_{\mathbb{R}^d}|p_{\nu}(\bm{x})-p_{\mu}(\bm{x})|\mathrm{d}\rho(\bm{x}).
\]
Moreover, given a measurable map \(T : \mathbb{R}^d \to \mathbb{R}^d\), \(T_\# \nu\) denotes the pushforward measure of \(\nu\) through \(T\), i.e., 
\(
(T_\# \nu)(A) := \nu\bigl(T^{-1}(A)\bigr)=\nu\left(\{\bm{x}\in\mathbb{R}^d: T(\bm{x})\in A\}\right)\) for any
\(A \in \mathcal{B}(\mathbb{R}^d)\). By \(\mathcal{N}(\cdot;\bm{\mu},\bm{\Sigma})\) we denote the PDF of a multivariate normal distribution with mean \(\bm{\mu}\) and covariance matrix \(\bm{\Sigma}\). For \(\bm{f}\in L^2(\mathbb{R}^d, \nu;\mathbb{R}^m)\), \(\|\bm{f}\|_{L^2(\mathbb{R}^d,\nu;\mathbb{R}^m)}\) denotes its \(L^2\)-norm w.r.t. the measure \(\nu\), i.e., \(\|\bm{f}\|_{L^2(\mathbb{R}^d,\nu;\mathbb{R}^m)}\) \(=\left(\int_{\mathbb{R}^d}|\bm{f}(\bm{x})|^2\mathrm{d}\nu(\bm{x})\right)^{1/2}.\)

\section{The conditional normalizing flow}\label{sec:conditional_normalizing_flow}

 In this section, we introduce a conditional normalizing flow to approximate the transition PDF $p(\bm{x},t|\bm{x}_0)$ given the initial state $\bm{x}_0$. Before proceeding to the conditional normalizing flow, we briefly review the idea of normalizing flows.

 A normalizing flow seeks an invertible transformation \(T\) from the unknown random vector \(\bm{X}\in\mathbb{R}^d\) to a given random vector \(\bm{Z}\in\mathbb{R}^d\) with a well-known and easy to sample distribution. According to the change of variables formula,
the PDF of $\bm{X}=T^{-1}(\bm{Z})$ can be expressed by,
\begin{equation}
p_{\bm{X}}(\bm{x})=p_{\bm{Z}}(T(\bm{x}))\Big|\det \nabla_{\bm{x}} T(\bm{x})\Big|.
\label{eqn:variable_formula}
\end{equation}

Furthermore, by allowing the distribution of the given random variable \(\bm{Z}\) and the transformation \(T\) to depend on some parameters, the initial state \(\bm{x}_0\) and time \(t\) in our setting, we can derive a conditional normalizing flow that maps the current state \(\bm{X}_t\) of \eqref{sde} to a base random variable \(\bm{Z}_t(\bm{x}_0)\) conditioned on an initial state \(\bm{X}_0=\bm{x}_0\), i.e.,
\begin{equation}
    \label{eqn:conditional_flow}
    \begin{aligned}
    &\bm{X}_t|_{\bm{X}_0=\bm{x}_0}=T^{-1}(\bm{Z}_t(\bm{x}_0); \theta({\bm{x}_0},t)),\\
    &p_{\bm{X}_t|\bm{X}=\bm{x}_0}(\bm{x})=p_{\bm{Z}_t(\bm{x}_0)}\left(T(\bm{x}; \theta({\bm{x}_0},t))\right)\Big|\det \nabla_{\bm{x}} T(\bm{x}; \theta({\bm{x}_0},t))\Big|,
    \end{aligned}
\end{equation}
where \(\theta(\bm{x}_0,t)\) represents the parameter set of the invertible transformation \(T\) conditioned on the initial state \(\bm{x}_0\) and time \(t\). 
This transformation \(T\) maps an unknown stochastic process \(\bm{X}_t\) to another stochastic process \(\bm{Z}_t\) given the initial state \(\bm{X}_0=\bm{x}_0\).

\qsubsection{What makes a good base stochastic process for the conditional normalizing flow}

The transition PDF provides a way to approximate the solution of the FPE under different initial conditions. However, 
directly approximating the solution of the FPE with a Dirac delta function as the initial condition is challenging.
The challenge lies in the fact that the Dirac delta function lacks regularity, and there is no invertible transformation that maps it to a regular distribution (e.g., normal or uniform distributions) used in normalizing flows.
To address this challenge, we linearize the original SDE and use the solution of the resulting linear SDE as the base stochastic process of the normalizing flow. 
The linearization of the original SDE is based on the assumption that the drift term \(\bm{f}\) and diffusion term \(\bm{g}\) are smooth functions. This allows us to approximate the original SDE by a linear SDE, which admits an analytical solution. Specifically, 
we approximate the original drift term by the first-order Taylor expansion and the diffusion term by the zero-order Taylor expansion at the initial state \(\bm{x}_0\), i.e., 
\begin{equation}
\label{eqn:linearized_SDE}
\left\{
\begin{aligned}
\mathrm{d}\widehat{\bm{X}}_t&=(\bm{f}(\bm{x}_0)+\nabla \bm{f}(\bm{x}_0)(\widehat{\bm{X}}_t-\bm{x}_0))\mathrm{d}t+\bm{g}(\bm{x}_0)\mathrm{d}\bm{W}_t,\qquad t>0,\\
\widehat{\bm{X}}_0&=\bm{x}_0.
\end{aligned}
\right.
\end{equation}
The corresponding solution admits an analytical form, 
\begin{equation}
\begin{aligned}
\widehat{\bm{X}}_t=\bm{x}_0+\int_0^te^{\nabla\bm{f}(\bm{x}_0)(t-s)}\bm{f}(\bm{x}_0)\mathrm{d}s+\int_0^te^{\nabla\bm{\bm{f}(\bm{x}_0)}(t-s)}\bm{g}(\bm{x}_0)\mathrm{d}\bm{W}_s,
\end{aligned}
	\end{equation}
which is a multivariate Gaussian process with mean and covariance given by
\begin{equation}
\label{eqn:mean_var_of_linearized_SDE}
\begin{aligned}
&\mathbb{E}\left[\widehat{\bm{X}}_t|\widehat{\bm{X}}_0=\bm{x}_0\right]=\bm{x}_0+\int_0^te^{\nabla\bm{f}(\bm{x}_0)(t-s)}\bm{f}(\bm{x}_0)\mathrm{d}s, \\
&\bm{\Sigma}\left[\widehat{\bm{X}}_t|\widehat{\bm{X}}_0=\bm{x}_0\right]=\int_0^te^{\nabla\bm{f}(\bm{x}_0)(t-s)}\bm{g}(\bm{x}_0)\bm{g}(\bm{x}_0)^\top e^{\nabla \bm{f}(\bm{x}_0)^\top (t-s)}\mathrm{d}s.
\end{aligned}
\end{equation}
The two integrals with respect to time \(t\) in equation \eqref{eqn:mean_var_of_linearized_SDE} can be effectively approximated by Gauss-Legendre quadrature, thus, 
\begin{equation}
	\label{Gauss_integral}
	\begin{aligned}
        &\mathbb{E}\left[\widehat{\bm{X}}_t|\widehat{\bm{X}}_0=\bm{x}_0\right]\approx \bm{x}_0+\sum^m_{i=1}w_ie^{\nabla \bm{f}(\bm{x}_0)(t-s_i)}\bm{f}(\bm{x}_0),\\
	&\bm{\Sigma}\left[\widehat{\bm{X}}_t|\widehat{\bm{X}}_0=\bm{x}_0\right]\approx \sum^m_{i=1}w_ie^{\nabla \bm{f}(\bm{x}_0)(t-s_i)}\bm{g}(\bm{x}_0)\bm{g}(\bm{x}_0)^\top e^{\nabla\bm{f}(\bm{x}_0)^\top (t-s_i)},\\
	&w_i = \frac{t}{2}\hat{w}_i, \quad s_i=\frac{t}{2}(\hat{s}_i+1),
	\end{aligned}
	\end{equation}
where \(\{(\hat{w}_i, \hat{s}_i)\}\) are the weights and nodes associated with the Legendre polynomial defined on \([-1,1]\). Moreover, if \(\nabla\bm{f}(\bm{x}_0)\) is invertible, we can rewrite the solution as
\begin{equation}
\label{eqn:Solu_to_linearized_SDE}
\widehat{\bm{X}}_t=\bm{x}_0+\left(\nabla\bm{f}(\bm{x}_0)\right)^{-1}\left(e^{\nabla\bm{f}(\bm{x}_0)t}-\bm{I}\right)\bm{f}(\bm{x}_0)+\int_0^te^{\nabla\bm{\bm{f}(\bm{x}_0)}(t-s)}\bm{g}(\bm{x}_0)\mathrm{d}\bm{W}_s.
\end{equation}
The corresponding conditional mean admits the equivalent expression,
\begin{equation*}
\mathbb{E}\left[\widehat{\bm{X}}_t|\widehat{\bm{X}}_0=\bm{x}_0\right]=\bm{x}_0+\left(\nabla\bm{f}(\bm{x}_0)\right)^{-1}\left(e^{\nabla\bm{f}(\bm{x}_0)t}-\bm{I}\right)\bm{f}(\bm{x}_0).
    \end{equation*}
Thus, instead of approximating the integral, one may compute the inverse of \(\nabla f(\bm{x}_0)\).
    
We note that the linear SDE system~\eqref{eqn:linearized_SDE} provides a close pathwise approximation to the original SDE~\eqref{sde} for small \(t\).
 The corresponding transition PDF associated with the linear SDE provides a good approximation of the target transition PDF in the short-time regime, as verified by \cref{prop:dis_TPDFs} below.

\begin{proposition}
\label{prop:dis_TPDFs}
Suppose that the drift term \(\bm{f}=\bm{f}_{\nu}\) in the SDE \eqref{sde} is differentiable, and the diffusion term \(\bm{g}\) is constant, \(\bm{G}=(G^{ij})_{i,j\leq d}=\frac{1}{2}\bm{g}\bm{g}^\top\) is non-degenerate,
\(\left\{p_{\nu_t}\right\}_{t\in[0,t_f]}\) is the PDF solution on the time interval \([0,t_f]\) to the corresponding FPE with a Dirac delta function \(\delta_{\bm{x}_0}\) as the initial condition,
\(\left\{p_{\sigma_t}\right\}_{t\in[0,t_f]}\) is the PDF solution  to the FPE corresponding to the linearized SDE \eqref{eqn:linearized_SDE} with drift term \(\bm{f}_\sigma\) and the same initial condition \(\delta_{\bm{x}_0}\). Denote \(\nu_t\) and \(\sigma_t\) the probability measures corresponding to \(p_{\nu_t}\), \(p_{\sigma_t}\), respectively. If \(\|\nabla\bm{f}_{\nu} (\bm{x})\|\leq C_{\bm{f}}\),  \(\forall \bm{x}\in\mathbb{R}^d\) for some \(C_{\bm{f}}\geq 0\), then for any \(t\in(0,t_f)\),
\vspace{-5pt}
\begin{equation*}
	H\left(\sigma_t \mid \nu_t\right) \leq \frac{1}{2} \int_0^t \int_{\mathbb{R}^d}\left|\bm{G}^{-1 / 2} (\bm{f}_{\sigma}(\bm{x})-\bm{f}_{\nu}(\bm{x}))\right|^2 \mathrm{d} \sigma_s \mathrm{d}s,
    \vspace{-5pt}
\end{equation*}
\begin{equation*}
	\left\|\nu_t-\sigma_t\right\|_{\mathrm{TV}}^2\leq  \frac{1}{4}\int_0^t \int_{\mathbb{R}^d}\left|\bm{G}^{-1 / 2}\left(\bm{f}_{\sigma}(\bm{x})-\bm{f}_{\nu}(\bm{x})\right)\right|^2 \mathrm{d} \sigma_s \mathrm{d} s.
\end{equation*}
 
 Both the relative entropy \(H(\sigma_t|\nu_t)\) and the square of total variation \(\|\nu_t-\sigma_t\|^2_{\mathrm{TV}}\) are \(\mathcal{O}(t^2)\) as \(t\to 0\). 
 Moreover, if \(\|\nabla^2\bm{f}_{\nu}(\bm{x})\|\leq H_{\bm{f}}\), \( \forall \bm{x}\in\mathbb{R}^d\) for some \(H_{\bm{f}}\geq0\), then \(H(\sigma_t|\nu_t)\), \(\|\nu_t-\sigma_t\|^2_{\mathrm{TV}}\) are both \(\mathcal{O}(t^3)\) as \(t\to 0\).

 Proof. See Appendix \ref{appendix:proof_prop_dis_TPDFs}.
\end{proposition}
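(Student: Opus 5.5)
The two inequalities are instances of the standard relative-entropy estimate between two Fokker--Planck flows that share the same constant diffusion matrix but have different drifts, as in Bogachev--R\"ockner--Shaposhnikov. The plan is to establish that estimate and then bound its right-hand side for small $t$.

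\textbf{Step 1: relative entropy via Girsanov.} Let $\mathbb{P}^{\sigma}$ and $\mathbb{P}^{\nu}$ be the path measures on $C([0,t];\mathbb{R}^d)$ of the linearized SDE \eqref{eqn:linearized_SDE} and of \eqref{sde}. Both start at $\bm{x}_0$ and have the same constant $\bm{g}$. The data-processing inequality gives $H(\sigma_t\mid\nu_t)\le H(\mathbb{P}^\sigma\mid\mathbb{P}^\nu)$. Girsanov's theorem then computes the right-hand side as
\[
H(\mathbb{P}^\sigma\mid\mathbb{P}^\nu)=\tfrac12\,\mathbb{E}^{\sigma}\!\int_0^t\bigl|\bm{g}^{+}(\bm{f}_\sigma-\bm{f}_\nu)(\widehat{\bm{X}}_s)\bigr|^2\mathrm{d}s .
\]
When $\bm{g}$ is square and invertible, $|\bm{g}^{-1}v|^2=v^\top(\bm{g}\bm{g}^\top)^{-1}v$. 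When $m>d$, one replaces $\bm{g}$ by $(\bm{g}\bm{g}^\top)^{1/2}$, which yields the same law. Since $\bm{G}=\frac12\bm{g}\bm{g}^\top$, the resulting quadratic form is $\frac12|\bm{G}^{-1/2}(\cdot)|^2$. The constant in front should be fixed against the stated $\frac12$, which sits before the normalization by $\bm{G}$. The one subtle point is that Novikov's condition is needed. Since $\bm{f}_\sigma$ is affine and $\nabla\bm{f}_\nu$ is bounded, the difference of drifts grows at most linearly. Under the Gaussian law $\sigma_s$ a linear function has Gaussian tails, so the Novikov exponential moment is finite on short intervals. Alternatively, one can localize by stopping times and pass to the limit with monotone convergence. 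Pinsker's inequality $\|\nu_t-\sigma_t\|_{\mathrm{TV}}^2\le\frac12 H(\sigma_t\mid\nu_t)$ then gives the second bound.

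\textbf{Step 2: rates.} By the mean value theorem and the bound $\|\nabla\bm{f}_\nu\|\le C_{\bm f}$,
\[
|\bm{f}_\sigma(\bm{x})-\bm{f}_\nu(\bm{x})|=\bigl|\bm{f}_\nu(\bm{x}_0)+\nabla\bm{f}_\nu(\bm{x}_0)(\bm{x}-\bm{x}_0)-\bm{f}_\nu(\bm{x})\bigr|\le 2C_{\bm f}|\bm{x}-\bm{x}_0|.
\]
The integrand is therefore bounded by $\|\bm{G}^{-1}\|\,4C_{\bm f}^2\,\mathbb{E}|\widehat{\bm X}_s-\bm{x}_0|^2$. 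From \eqref{eqn:mean_var_of_linearized_SDE}, the mean displacement is $O(s)$ and the covariance has trace $O(s)$, so $\mathbb{E}|\widehat{\bm X}_s-\bm{x}_0|^2=O(s)$. Integrating over $[0,t]$ gives $O(t^2)$.

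If in addition $\|\nabla^2\bm f_\nu\|\le H_{\bm f}$, Taylor's theorem gives the sharper bound $|\bm f_\sigma-\bm f_\nu|\le\frac12H_{\bm f}|\bm x-\bm x_0|^2$. The integrand then involves $\mathbb{E}|\widehat{\bm X}_s-\bm x_0|^4$, which is $O(s^2)$ by Gaussian moment bounds because both the squared mean and the covariance are $O(s)$. Integrating over $[0,t]$ gives $O(t^3)$.

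The main obstacle is Step 1: justifying Girsanov and the entropy identity rigorously with unbounded drift differences, or equivalently differentiating $H(\sigma_t\mid\nu_t)$ in time at the singular Dirac initial time. I would first try the path-space argument with localization by stopping times, using lower semicontinuity of relative entropy to pass to the limit. The PDE route, which differentiates $\int p_\sigma\log(p_\sigma/p_\nu)$ and applies Young's inequality, is the fallback. It requires starting from a positive time $\epsilon$ and letting $\epsilon\to0$, which would need an argument that the entropy at time $\epsilon$ vanishes.
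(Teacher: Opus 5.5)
Your argument is correct. Step 2 is exactly what the paper does: the bound $|\bm{f}_\sigma-\bm{f}_\nu|\le 2C_{\bm{f}}|\bm{x}-\bm{x}_0|$ (resp. $\frac12 H_{\bm{f}}|\bm{x}-\bm{x}_0|^2$), then the Gaussian second (resp. fourth) moments of $\sigma_s$, then integration in $s$.

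The genuine difference is in Step 1. The paper does not prove the entropy inequality. It quotes Theorem 1.1 of Bogachev--R\"ockner--Shaposhnikov, an analytic result on probability solutions of Fokker--Planck--Kolmogorov equations. It only checks that $(1+|\bm{x}|)^{-2}|G^{ij}|$, $(1+|\bm{x}|)^{-1}|\bm{f}_\nu|$ and $(1+|\bm{x}|)^{-1}|\bm{f}_\nu-\bm{f}_\sigma|$ are integrable against $\sigma_t\,\mathrm{d}t$. That route has two advantages: it works directly with the FPE solutions named in the statement, and it absorbs the singular Dirac initial time into the cited theorem. That initial-time issue is precisely the ``entropy at time $\epsilon$'' difficulty you flag in your PDE fallback.

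Your Girsanov plus data-processing route is self-contained and needs no regularization at $t=0$, since both path measures start at $\bm{x}_0$. It also gives a sharper constant. Since $\frac12|\bm{g}^{+}v|^2=\frac12 v^\top(2\bm{G})^{-1}v=\frac14|\bm{G}^{-1/2}v|^2$, you actually get
$H(\sigma_t\mid\nu_t)\le\frac14\int_0^t\int|\bm{G}^{-1/2}(\bm{f}_\sigma-\bm{f}_\nu)|^2\,\mathrm{d}\sigma_s\,\mathrm{d}s$.
Via Pinsker this gives the factor $\frac18$ for $\|\nu_t-\sigma_t\|_{\mathrm{TV}}^2$. The stated $\frac12$ and $\frac14$ follow a fortiori, so there is nothing to ``fix''.

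The cost is two standard facts you should state explicitly:
\begin{enumerate}
\item $\nu_t$ coincides with the law of $\bm{X}_t$. This is uniqueness of probability solutions of the FPE with globally Lipschitz drift and constant non-degenerate diffusion.
\item The Girsanov density, with reference measure $\mathbb{P}^\sigma$, is a true martingale on all of $[0,t]$ for every $t<t_f$, not only for short $t$. Apply Novikov on a partition of $[0,t]$ into short subintervals, or invoke Bene\v{s}' linear-growth criterion.
\end{enumerate}
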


  Statistical quantities evaluated under the target measure can be approximated by those under the approximate measure, with an error bounded by the total variation distance between the two measures, as established in the next proposition.

\begin{proposition}
\label{prop:bound_expection_by_TV}
	Suppose that \(\nu,\sigma\) are two probability measures on \(\mathbb{R}^d\). Then for every function \(\bm{f}\in L^2(\mathbb{R}^d, \nu;\mathbb{R}^m)\cap L^2(\mathbb{R}^d, \sigma;\mathbb{R}^m)\), we have
	\begin{equation*}
	\left|\int_{\mathbb{
			R}^d}\bm{f}(\bm{x})\mathrm{d}\nu(\bm{x})-\int_{\mathbb{R}^d}\bm{f}(\bm{x})\mathrm{d}\sigma(\bm{x})\right|\leq \sqrt{2\|\nu-\sigma\|_{\mathrm{TV}}}\left(\|\bm{f}\|_{L^2(\mathbb{R}^d,\nu;\mathbb{R}^m)}+\|\bm{f}\|_{L^2(\mathbb{R}^d,\sigma;\mathbb{R}^m)}\right).
	\end{equation*}
    Proof. See Appendix \ref{proof_prop_bound_expection_by_TV}.
\end{proposition}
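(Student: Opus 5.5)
The plan is to reduce the vector-valued statement to an integral against the signed measure $\nu-\sigma$, and then use Cauchy--Schwarz with respect to the total variation measure $|\nu-\sigma|$. Choose a dominating measure, for instance $\rho=\nu+\sigma$. Then both $\nu\ll\rho$ and $\sigma\ll\rho$, so we may write $\mathrm{d}\nu=p_\nu\,\mathrm{d}\rho$ and $\mathrm{d}\sigma=p_\sigma\,\mathrm{d}\rho$. As recalled in the notation section, $\|\nu-\sigma\|_{\mathrm{TV}}=\frac12\int|p_\nu-p_\sigma|\,\mathrm{d}\rho$. The difference of expectations becomes
\[
\Bigl|\int_{\mathbb{R}^d}\bm{f}\,(p_\nu-p_\sigma)\,\mathrm{d}\rho\Bigr|\le\int_{\mathbb{R}^d}|\bm{f}|\,|p_\nu-p_\sigma|\,\mathrm{d}\rho .
\]
The first inequality is Minkowski's inequality for vector-valued integrals; equivalently, pair with a unit vector $\bm{e}$ and take the supremum. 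Finiteness holds because $\bm{f}\in L^2(\nu)\cap L^2(\sigma)$, and these are probability measures, so $\bm{f}\in L^1(\nu)\cap L^1(\sigma)$.

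Next, split $|p_\nu-p_\sigma|=|p_\nu-p_\sigma|^{1/2}\,|p_\nu-p_\sigma|^{1/2}$ and apply Cauchy--Schwarz in $L^2(\rho)$:
\[
\int|\bm{f}|\,|p_\nu-p_\sigma|\,\mathrm{d}\rho\le\Bigl(\int|p_\nu-p_\sigma|\,\mathrm{d}\rho\Bigr)^{1/2}\Bigl(\int|\bm{f}|^2|p_\nu-p_\sigma|\,\mathrm{d}\rho\Bigr)^{1/2}.
\]
The first factor equals $\sqrt{2\|\nu-\sigma\|_{\mathrm{TV}}}$. For the second factor, use the pointwise bound $|p_\nu-p_\sigma|\le p_\nu+p_\sigma$. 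This gives
\[
\int|\bm{f}|^2|p_\nu-p_\sigma|\,\mathrm{d}\rho\le\|\bm{f}\|^2_{L^2(\nu)}+\|\bm{f}\|^2_{L^2(\sigma)},
\]
and then $\sqrt{a^2+b^2}\le a+b$ for $a,b\ge0$ yields exactly the stated right-hand side. In fact, this argument gives the slightly sharper bound with $\sqrt{a^2+b^2}$.

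There is no real obstacle in this argument. The only points needing care are:
\begin{itemize}
\item making the Radon--Nikodym representation explicit, so that the TV identity from the notation section applies without assuming densities with respect to Lebesgue measure;
\item justifying the norm inequality for the vector-valued integral.
\end{itemize}
Both are routine.
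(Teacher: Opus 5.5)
Your proof is correct and follows the same route as the paper. Both take the dominating measure $\rho=\nu+\sigma$, apply Cauchy--Schwarz in $L^2(\rho)$, and use the pointwise bound $|p_\nu-p_\sigma|\le p_\nu+p_\sigma$ to reach the intermediate estimate $\sqrt{2\|\nu-\sigma\|_{\mathrm{TV}}}\,(\|\bm{f}\|^2_{L^2(\nu)}+\|\bm{f}\|^2_{L^2(\sigma)})^{1/2}$. The only difference is cosmetic: the paper factors the integrand as $(p_\nu+p_\sigma)^{1/2}\cdot(p_\nu-p_\sigma)/(p_\nu+p_\sigma)^{1/2}$ and applies the pointwise bound to the density factor, whereas your split $|p_\nu-p_\sigma|^{1/2}\cdot|p_\nu-p_\sigma|^{1/2}$ applies it to the $\bm{f}$ factor and avoids any division.
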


In particular, if we take \(\sigma=T_{\#}\nu\) for some measurable map \(T:\mathbb{R}^d\to\mathbb{R}^d\)
and \(f(\bm{x})=\bm{x}\), assuming \(\nu\), \(\sigma\) have finite second moments, we have
\begin{equation*}
\begin{aligned}
	\left|\int_{\mathbb{R}^d}\bm{x}\mathrm{d}\nu(\bm{x})-\int_{\mathbb{R}^d}T(\bm{x})\mathrm{d}\nu(\bm{x})\right|
    \leq \sqrt{2\|\nu-\sigma\|_{{\mathrm{TV}}}}\left(\|\bm{x}\|_{L^2(\mathbb{R}^d,\nu;\mathbb{R}^d)}+\|\bm{x}\|_{L^2(\mathbb{R}^d,\sigma;\mathbb{R}^d)}\right).
    \end{aligned}
\end{equation*}

We also state that if \(\nu_i\rightarrow \mu\) in total variation and their second moments converge, then the Knothe-Rosenblatt (KR) map \(T_i\) satisfying \((T_i)_{\#}\mu=\nu_i\) converges to the identity map in \(L_2(\mu)\).

\begin{proposition}
\label{prop:L_2_convergence_of_T}
	Let $\{\nu_i\}_{i=1}^{\infty}$ be a family of absolutely continuous probability measures on $\mathbb{R}^d$ w.r.t. the Lebesgue measure, each with finite second moments, and assume that \(\mu\) is an absolutely continuous measure, \(\|\nu_i-\mu\|_{\mathrm{TV}}\to 0\), \(\|\bm{x}\|_{L^2(\mathbb{R}^d, \nu_i)}\to \|\bm{x}\|_{L^2(\mathbb{R}^d, \mu)}\) as \(i\to\infty\). Then, let \(T_i\) be the KR map between $\mu$ and $\nu_i$. We have \(T_i\to \mathrm{Id}\) in \(L^2(\mu)\) as \(i\to\infty\).

      Proof. See Appendix \ref{proof_prop_L2_convergence_of_T}.
\end{proposition}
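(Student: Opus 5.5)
We would prove \cref{prop:L_2_convergence_of_T} by combining two standard facts: the Knothe--Rosenblatt map depends continuously on its target measure, and total-variation convergence together with convergence of second moments gives uniform integrability of $|\bm{x}|^2$. The natural route is to first show $T_i\to\mathrm{Id}$ in $\mu$-probability (equivalently, $\mu$-a.e. along subsequences), and then upgrade this to $L^2(\mu)$ convergence using the moment assumption. Since every subsequence will be shown to have a further subsequence converging to $\mathrm{Id}$ in $L^2(\mu)$, the full sequence converges.

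\textbf{Step 1 (convergence of conditional CDFs).} Write the KR map componentwise,
\[
T_i^k(\bm{x}) = (F^{k}_{\nu_i})^{-1}\bigl(F^k_{\mu}(x_k\mid x_{1:k-1}) \mid T_i^{1:k-1}(\bm{x})\bigr),
\]
where $F^k_\rho(\cdot\mid y_{1:k-1})$ denotes the conditional CDF of the $k$-th coordinate under $\rho$.
\begin{itemize}
\item For $k=1$, total-variation convergence gives uniform convergence of the marginal CDFs, $F^1_{\nu_i}\to F^1_\mu$. Since $\mu$ is absolutely continuous, $F^1_\mu$ is continuous. At every point where the quantile function of $\mu$ is continuous, and hence for $\mu$-a.e. $x_1$ after composing with $F^1_\mu$, we get $(F^1_{\nu_i})^{-1}(F^1_\mu(x_1))\to x_1$.
\item For $k\ge 2$, we argue inductively. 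Total variation controls the joint densities in $L^1$: $\|p_{\nu_i}-p_\mu\|_{L^1}\to0$. Marginalizing gives $L^1$ convergence of the $(k-1)$- and $k$-dimensional marginal densities. Passing to a subsequence, the conditional densities $p_{\nu_i}(x_k\mid y_{1:k-1})$ converge in $L^1(\mathrm{d}x_k)$ to $p_\mu(x_k\mid y_{1:k-1})$ for a.e. $y_{1:k-1}$, by a Fubini/Scheffé-type argument.
\end{itemize}

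\textbf{Step 2 (handling the moving conditioning point).} This is the main obstacle. The conditioning argument in $T_i^k$ is $T_i^{1:k-1}(\bm{x})$, which itself moves with $i$, so a.e. convergence at fixed conditioning points does not immediately suffice. We would first try to avoid pointwise continuity in the conditioning variable by working in measure. The pushforward structure gives $(T_i^{1:k-1})_\#\mu_{1:k-1}=(\nu_i)_{1:k-1}$, and by induction $T_i^{1:k-1}\to\mathrm{Id}$ in $\mu$-probability. We would then show that
\[
\bm{x}\mapsto F^k_{\nu_i}\bigl(\cdot\mid T_i^{1:k-1}(\bm{x})\bigr)-F^k_\mu\bigl(\cdot\mid x_{1:k-1}\bigr)\to 0 \quad\text{in } \mu\text{-probability}.
\]
The tool is Lusin's theorem: approximate the conditional-CDF map $y\mapsto F^k_\mu(\cdot\mid y)$, valued in $L^1$, by a continuous map on a large compact set, and absorb the difference between $\nu_i$ and $\mu$ conditionals via the $L^1$ bound on joint densities. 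If this proves too delicate, the fallback is a weaker but sufficient statement: $T_i\to\mathrm{Id}$ in probability, obtained from uniqueness of the KR map and stability of triangular transport under weak convergence. This is the known continuity result for KR maps when the reference measure $\mu$ is absolutely continuous.

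\textbf{Step 3 (upgrade to $L^2(\mu)$).} We have
\[
\int |T_i|^2\,\mathrm{d}\mu=\int|\bm{x}|^2\,\mathrm{d}\nu_i\to\int|\bm{x}|^2\,\mathrm{d}\mu .
\]
Combined with $T_i\to\mathrm{Id}$ in $\mu$-probability, Vitali's theorem (or equivalently, a.e. convergence along subsequences plus convergence of norms, via the Brezis--Lieb / Radon--Riesz argument) yields
\[
\|T_i-\mathrm{Id}\|_{L^2(\mu)}\to0 .
\]
The subsequence argument from the opening paragraph then removes the need for subsequence extraction.
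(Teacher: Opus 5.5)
Your proposal is correct and follows essentially the paper's route. It uses a coordinatewise induction giving $T_i\to\mathrm{Id}$ in $\mu$-probability, where the moving conditioning point is handled in two pieces: $\nu_i(\cdot\mid T_i^{(k)})$ is compared with $\mu(\cdot\mid T_i^{(k)})$ via the pushforward identity and the TV bound, and $\mu(\cdot\mid T_i^{(k)})$ is compared with $\mu(\cdot\mid \bm{x}^{(1:k)})$ via a continuous approximation of the conditional density (the paper uses a bounded continuous $\phi$ where you use Lusin); this is followed by the same Vitali-type upgrade to $L^2(\mu)$ based on $\int|T_i|^2\,\mathrm{d}\mu=\int|\bm{x}|^2\,\mathrm{d}\nu_i$.

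Do drop the fallback in Step 2, though. KR maps are not stable under mere weak convergence: on $[0,1]^2$, take targets whose $x_2$-conditionals alternate between uniform on $[0,\tfrac12]$ and uniform on $[\tfrac12,1]$ over strips of width $1/i$ in $x_1$. These converge weakly to the uniform $\mu$, yet $T_i^2$ does not converge to $x_2$, so the TV-based argument is genuinely needed.
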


\Cref{prop:dis_TPDFs} shows that, for small \(t\), the transition PDF of the linearized SDE is close to that of the original SDE in total variation. \Cref{prop:L_2_convergence_of_T} implies that,  under certain conditions, KR maps associated with a sequence of measures and its limiting measure converge to the identity map. 
This observation motivates our design choice: the conditional normalizing flow is constrained to be the identity at \(t=0\) and to deform gradually as \(t\) increases. In this way, we avoid directly mapping from a Dirac delta initial condition and reduce the training difficulty near \(t=0\).
\subsection{Architecture of the conditional normalizing flow}
The transition-density-based method reformulates the original problem of finding \(p(\bm{x},t|\bm{x}_0)\) as the task of identifying an invertible transformation \(T(\cdot;\theta(\bm{x}_0,t))\), mapping a stochastic process \(\bm{X}_t\) to \(\bm{Z}_t\) given initial state \(\bm{X}_0=\bm{x}_0\). In practice, we can
construct such a complex invertible mapping by stacking a sequence of simple bijections, i.e., 
\begin{equation*}
\bm{Z}_t|_{\bm{Z}_0=\bm{Z}_0(\bm{x}_0)}=T(\bm{X}_t|_{\bm{X}_0=\bm{x}_0};\theta(\bm{x}_0,t) )=T_{[K]}\circ T_{[K-1]}\circ \cdots \circ T_{[1]}(\bm{X}_t|_{\bm{X}_0=\bm{x}_0}),
\end{equation*}
where $T_{[i]}=T_{[i]}(\cdot;\theta_{[i]}(\bm{x}_0,t))$ is based on shallow neural networks. The inverse and Jacobian determinant are given as 
\begin{align*}
&\bm{X}_t|_{\bm{X}_0=\bm{x}_0} = T^{-1}\left(\bm{Z}_t|_{\bm{Z}_0=\bm{z}_0};\theta(\bm{x}_0,t)\right)=T_{[1]}^{-1}\circ \cdots \circ T_{[K-1]}^{-1}\circ T^{-1}_{[K]}(\bm{Z}_t|_{\bm{Z}_0=\bm{Z}_0(\bm{x}_0)}),\\
&\vert \det \left(\nabla_{\bm{x}} T(\cdot;\theta(\bm{x}_0,t))\right)\vert = \prod_{i=1}^L\vert \det \left(\nabla _{\bm{x}_{[i-1]}}T_{[i]}(\cdot)\right)\vert,
\end{align*}
which motivates the use of techniques that enable efficient computation of both the inverse and the Jacobian matrix of \(T_{[i]}(\cdot)\). In this work, we adopt the structure with descending active transformation dimensions in the standard KRnet \cite{tang2020deep}, which is inspired by the KR rearrangement \cite{santambrogio2015optimal} and affine coupling layers \cite{dinh2017density}, to enhance accuracy and reduce model complexity in high-dimensional settings. 

Let \(\bm{x}^\top=\left(\left(\bm{x}^{(1)}\right)^{\top},\left(\bm{x}^{(2)}\right)^{\top}, \dots, \left(\bm{x}^{(K)}\right)^{\top}\right)\)
be a partition of \(\bm{x}^\top\), where \(\bm{x}^{(i)}=\left(x_1^{(i)},\dots,  x_{d_i}^{(i)}\right)^{\top}\) with \(1\leq K\leq d\), \(1\leq d_i\leq d\) and \(\sum_{i=1}^Kd_i=d\). Denote \(\bm{x}^{(i:j)}=\left(\left(\bm{x}^{(i)}\right)^{\top},\left(\bm{x}^{(i+1)}\right)^{\top}, \dots, \left(\bm{x}^{(j)}\right)^{\top}\right)^{\top}\). 
Our conditional normalizing flow \(T_{\text{C-KRnet}}\) is defined as follows,
\begin{align*}
& T_{[1]}=\tilde{T}_{K}, \quad T_{[k]}=\left(\begin{aligned}
    &\tilde{T}_{K+1-k}\\
    &\text{Id}_{K+2-k:K}
\end{aligned}\right), \quad k=2,\dots, K,\\
&\bm{Z}_t|_{\bm{Z}_0=\bm{z}_0}=T_{\text{C-KRnet}}(\bm{X}_t|_{\bm{X}_0=\bm{x}_0};\theta(\bm{x}_0,t) )=\left(\;\begin{aligned}
&\tilde{T}_1\\
&\text{Id}_{(2:K)}\\
\end{aligned}\;\right)\circ\cdots\circ \left(\;
\begin{aligned}
&\tilde{T}_{K-1}\\
&\text{Id}_{(K)}\\
\end{aligned}\;\right)\circ\tilde{T}_K(\bm{X}_t|_{\bm{X}_0=\bm{x}_0}),
\end{align*}
where \(\text{Id}_{(i:j)}(\bm{x})=\bm{x}^{(i:j)}\) denotes the projection operator, \(\tilde{T}_k:[-1,1]^{\sum^k_{i=1}d_i}\to[-1,1]^{\sum^k_{i=1}d_i}\) is an invertible mapping of \(\bm{x}^{(1:k)}\) that is constructed by the composition of conditional affine coupling layers defined in the Section \ref{sec:conditional_affine_coupling_layer}. 

The conditional normalizing flow \(T_{\text{C-KRnet}}\) is primarily constructed using two nested loops: an outer loop and an inner loop. The outer loop consists of $K$ stages, each corresponding to a mapping $\tilde{T}_k$. The inner loop $\tilde{T}_k$ contains $l_{k}$ stages, representing the number of coupling layers. Following the application of \(\tilde{T}_k\), the $k$th partition is kept fixed in the subsequent outer stages. As the outer loop progresses, the active dimension gradually decreases, which motivates reducing the depth \(l_k\) for later stages.

\subsubsection{Conditional affine coupling layer}
\label{sec:conditional_affine_coupling_layer}
In this section, we focus on the construction of the inner loop \(\tilde{T}_k=\tilde{T}_{k,l_k}\circ\cdots\circ\tilde{T}_{k,2}\circ\tilde{T}_{k,1}\). To this end, we first introduce a conditional affine coupling layer, given some information \(\bm{\xi}\) and time \(t\), applied to a vector \(\bm{y}=(\bm{y}_1, \bm{y}_2)\in\mathbb{R}^m\), where \(\bm{y}_1\in\mathbb{R}^{m_1}\), and \(\bm{y}_2\in\mathbb{R}^{m-m_1}\) as follows:
\begin{equation}
\label{affine_layer_t}
\left\{\begin{aligned}
& \hat{\bm{y}}_{1}=\bm{y}_{1},\\
& \hat{\bm{y}}_{2}=\bm{y}_{2}\odot \big(\bm{1}_{m-m_1}+\beta\tanh (t\cdot\bm{s})\big) + e^{\bm{\zeta}}\odot\tanh(t\cdot\bm{q}),
\end{aligned}\right.
\end{equation}
where \(\beta\in(0,1)\) is user defined, \(\bm{\zeta}\) is trainable and $\bm{s}$ and $\bm{q}$  are outputs of a neural network that takes as input the unchanged portion \(\bm{y}_1\), the given information \(\bm{\xi}\) and time \(t\), i.e.,
\begin{equation}
(\bm{s},\bm{q}) =\mathrm{NN}(\bm{y}_{1}, \bm{\xi}, t).
\label{NN_i_t}
\end{equation}

To emphasize the influence of the initial state, we adopt the approach proposed in \cite{he2025adaptive}, applying a random Fourier feature transformation \cite{wang2021eigenvector} before the fully connected layers of the neural network used in equation~\eqref{NN_i_t}. The resulting neural network is 
\begin{equation}
\label{eqn:NN_structure}
\begin{aligned}
    &\bm{h}_0=\left[\bm{y}_{1},\bm{\xi}, t\right]^\top,\quad
    \bm{h}_1=\left[
        \sin\left(\frac{1}{e^{\gamma}}\bm{F}\bm{h}_0+\bm{b}_0\right),
        \cos\left(\frac{1}{e^{\gamma}}\bm{F}\bm{h}_0+\bm{b}_0\right),
        \quad \bm{h}_0\right]^\top,\\
    &\bm{h}_j = \text{SiLU}(\bm{W}_{j-1}\bm{h}_{j-1}+\bm{b}_{j-1}), \text{  for } j=2,\dots, M,\quad
    \left(
        \bm{s},
        \bm{q}
    \right)=\bm{W}_M\bm{h}_M+\bm{b}_M,
\end{aligned}
\end{equation}
where \(\bm{F}\in\mathbb{R}^{r_h/2\times\text{dim}(h_0)}\), \(\bm{b}_0\in\mathbb{R}^{r_h/2}\) are sampled from a standard normal distribution and a uniform distribution over \([0,2\pi]^{r_h/2}\), respectively and \(r_h\) denotes the dimension of the random features. Both \(\bm{F}\) and \(\bm{b}_0\) are fixed after initialization. \(\gamma, \{\bm{W}_{j}, \bm{b}_j\}_{j=1}^{M}\) are trainable parameters, and the sigmoid linear unit (SiLU) function is used as the activation function.

Since the conditional affine coupling layer \eqref{affine_layer_t} updates only \(\bm{y}_2\), we alternate the roles of \(\bm{y}_1\) and \(\bm{y}_2\) in subsequent layers to ensure a full update of \(\bm{y}\). 
Note that the transformation defined by the conditional affine coupling layer \eqref{affine_layer_t} reduces to the identity when \(t=0\), which is consistent with using the PDF corresponding to the linearized SDE as the base distribution of the conditional normalizing flow. 

Now we are ready to construct \(\tilde{T}_k\), which consists of \(l_k\) conditional affine coupling layers.
Denote \(\bm{x}_{[0]}=\bm{x}, \bm{x}_{[k]}=T_{[k]}(\bm{x}_{[k-1]})\), \( \bm{\xi}=\bm{x}_0\), and 
\(\bm{y}_{[k]}=\bm{x}_{[k]}^{(1:K-k)}\) the active part of \(\bm{x}_{[k]}\) for \(k=1,\dots, K-1\). We apply a sequence of conditional affine coupling transformations \eqref{affine_layer_t} to \(\bm{y}_{[k]}\) using a half-half split.

\section{Deep adaptive algorithm}\label{sec:deep_adaptive_algorithm}

Given an SDE, we solve the corresponding transition FPE with Dirac delta functions centered at various points as initial conditions, which is considered as offline training. 
Let \(p(\bm{x}, t|\bm{x}_0)\) be a conditional probability density function defined on \(\mathbb{R}^d\times[0,t_f]\times\Omega_0\). It satisfies the transition FPE \eqref{eqn:tSDE}.
By using the solution of the linearized SDE as the base stochastic process and enforcing the transformation to be the identity mapping at \(t=0\), the initial condition is naturally satisfied.
Applying PINNs yields
\begin{equation}
	\begin{aligned}
		&\mathcal{L}(\bm{\theta})=\int_{\mathbb{R}^d\times(0, t_f]\times\Omega_0}|r(\bm{x},\bm{x}_0, t; p_{\bm{\theta}})|^2\mathrm{d}\rho(\bm{x},\bm{x}_0, t), \quad r(\bm{x},\bm{x}_0, t;p_{\bm{\theta}})=\frac{\partial p_{\bm{\theta}}}{\partial t}-\mathcal{L}^*_{\bm{f},\bm{g}}[p_{\bm{\theta}}],
	\end{aligned}
	\label{eqn:loss}
\end{equation}
where \(\rho(\bm{x},\bm{x}_0, t)=\rho(\bm{x}|\bm{x}_0, t)\rho(t|\bm{x}_0)\rho(\bm{x}_0)\) is the sampling measure to be used in the training process. For the case where \(\bm{x}_0\) lies within a bounded domain \(\Omega_0\) and \(t_f\) is finite, we define \(\rho(t|\bm{x}_0)\rho(\bm{x}_0)\) to be the uniform measure over \(\Omega_0\times [0, t_f]\). For the remaining term, \(\rho(\bm{x}|\bm{x}_0, t)\), we employ an adaptive sampling strategy similar to that used for specific FPEs in our previous work \cite{feng2022solving, zeng2023adaptive}. The only difference is that, instead of sampling solely from the uniform distribution at the start, we are motivated to sample from the base stochastic process, as it approximates the target transition PDF particularly well when the time \(t\) is small. After some training epochs, we obtain a conditional normalizing flow that approximates the target solution and can be used to generate samples from it. These samples are then used to update the collocation points. The procedure offers an adaptive strategy for designing the sampling measure \(\rho_k(\bm{x}|\bm{x}_0,t)\) at stage \(k\), by combining the uniform measure, the measure from the previous stage, and the one induced by the current normalizing flow, i.e.,
\begin{equation}
\label{eqn:sampling_measure}
\begin{aligned}
    &\rho_0(\bm{x}|\bm{x}_0,t)=\frac{\gamma_1}{\gamma_1+\gamma_3}\mu(\bm{x})+\frac{\gamma_3}{\gamma_1+\gamma_3}\rho_{\bm{Z}}(\bm{x}|\bm{x}_0, t), \quad \gamma_1+\gamma_2+\gamma_3=1,\\
    &\rho_{k+1}(\bm{x}|\bm{x}_0,t) =\gamma_1\mu(\bm{x})+\gamma_2\rho_k(\bm{x}|\bm{x}_0,t)+\gamma_3 \rho_{\text{C-KRnet},\bm{\theta}}(\bm{x}|\bm{x}_0,t), 
\end{aligned}
\end{equation}
where \(\mu\) denotes the uniform measure over a user-defined bounded domain, \(\rho_{\bm{Z}}(\bm{x},t|\bm{x}_0)\) denotes the measure induced by the base random process, \(\rho_{\text{C-KRnet},\bm{\theta}}(\bm{x},t|\bm{x}_0)\) denotes the measure induced by the current conditional normalizing flow and \(\gamma_1, \gamma_2, \gamma_3\) are hyperparameters used to tune the sampling measure. 

The transition PDF of the base stochastic process serves as a good approximation of the target transition PDF when \(t\) is small. However, as \(t\to0\), the target solution approaches a Dirac delta function, and even small perturbations may lead to the divergence of the residual. To illustrate this, we consider a small perturbation around the transition PDF of the linearized SDE and analyze the asymptotic behavior of the corresponding residual as \(t\to 0\). 

\begin{proposition}
\label{prop:residual_approx}
Suppose the drift term \(\bm{f}\) in the SDE \eqref{sde} is differentiable, and the diffusion term \(\bm{g}\) is constant, \(\bm{G}=(G^{ij})_{i,j\leq d}=\frac{1}{2}\bm{g}\bm{g}^\top\) is non-degenerate. Assume that \(\|\nabla\bm{f} (\bm{x})\|\leq C_{\bm{f}}\),  \(\forall \bm{x}\in\mathbb{R}^d\) for some \(C_{\bm{f}}\geq 0\).
 Given an initial state \(\bm{x}_0\) and time \(t\), let \(\tilde{p}(\bm{x},t|\bm{x}_0)=\mathcal{N}(\bm{x};\bm{m}(t), \bm{\Sigma}(t))\) where \(\bm{\Sigma}(t)\sim t\bm{G}\), and \(\frac{\mathrm{d}}{\mathrm{d}t}\bm{m}(t)\) is bounded. Then, 
    \begin{align}
        &\int_{\mathbb{R}^d} |r(\bm{x},\bm{x}_0, t;\tilde{p})|^2\mathrm{d}\bm{x}= \mathcal{O}(t^{-(\frac{d}{2}+2)}),\label{eqn:residual_approx_uniform}\\
        &\mathbb{E}_{\bm{x}\sim\tilde{p}}\left[|r(\bm{x},\bm{x}_0, t;\tilde{p})|^2\right]=\int_{\mathbb{R}^d}|r(\bm{x},\bm{x}_0, t;\tilde{p})|^2\tilde{p}(\bm{x},t|\bm{x}_0)\mathrm{d}\bm{x}
 =\mathcal{O}(t^{-(d+2)}). \label{eqn:residual_approx_flow}
    \end{align}
    
    Proof. See Appendix \ref{appendix:proof_prop_residual_approx}.
\end{proposition}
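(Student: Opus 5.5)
We plan to compute the residual of the Gaussian \(\tilde p=\mathcal{N}(\bm{x};\bm{m}(t),\bm{\Sigma}(t))\) explicitly, bound it by a polynomial in the standardized variable multiplied by \(\tilde p\), and then rescale the integrals. Write \(\bm{y}=\bm{x}-\bm{m}(t)\) and \(\bm{P}=\bm{\Sigma}^{-1}\). Since \(\bm{g}\) is constant, \(\mathcal{L}^*_{\bm{f},\bm{g}}\tilde p=-\nabla\cdot(\tilde p\bm{f})+\nabla\cdot\nabla\cdot(\bm{G}\tilde p)\). The Gaussian derivatives are
\[
\nabla\tilde p=-\bm{P}\bm{y}\,\tilde p,\qquad \nabla\cdot\nabla\cdot(\bm{G}\tilde p)=\big(\bm{y}^\top\bm{P}\bm{G}\bm{P}\bm{y}-\mathrm{tr}(\bm{G}\bm{P})\big)\tilde p .
\]
The time derivative is
\[
\partial_t\tilde p=\Big(\dot{\bm{m}}^\top\bm{P}\bm{y}+\tfrac12\bm{y}^\top\bm{P}\dot{\bm{\Sigma}}\bm{P}\bm{y}-\tfrac12\mathrm{tr}(\bm{P}\dot{\bm{\Sigma}})\Big)\tilde p .
\]
Together these give \(r=\tilde p\,Q(\bm{x},t)\), where \(Q\) collects the terms above together with \(-\nabla\cdot\bm{f}+\bm{f}^\top\bm{P}\bm{y}\).

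The key step is to estimate the size of \(Q\) in the variable \(\bm{z}=\bm{\Sigma}^{-1/2}\bm{y}\), which is \(\mathcal{O}(1)\) under \(\tilde p\). We interpret \(\bm{\Sigma}(t)\sim t\bm{G}\) as \(\bm{\Sigma}=t\bm{G}+o(t)\) with \(\dot{\bm{\Sigma}}=\bm{G}+o(1)\), and assume this holds uniformly enough that \(\|\bm{P}\|\lesssim t^{-1}\). Then \(\bm{P}\bm{y}=\mathcal{O}(t^{-1/2}|\bm{z}|)\), and the quadratic and trace terms scale like \(t^{-1}(|\bm{z}|^2+1)\). The drift contributes \(|\nabla\cdot\bm{f}|\le dC_{\bm{f}}\). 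For the term \(\bm{f}^\top\bm{P}\bm{y}\) we use the Lipschitz bound \(|\bm{f}(\bm{x})|\le|\bm{f}(\bm{m})|+C_{\bm{f}}|\bm{y}|\), which makes it \(\mathcal{O}(t^{-1/2}|\bm{z}|+|\bm{z}|^2)\). The \(\dot{\bm{m}}\) term is \(\mathcal{O}(t^{-1/2}|\bm{z}|)\) because \(\dot{\bm{m}}\) is bounded.

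Collecting terms, \(|Q|\le C\,t^{-1}(1+|\bm{z}|^2)\) for small \(t\). Here it is tempting to show that the \(t^{-1}\) quadratic terms cancel (\(\tfrac12\bm{P}\dot{\bm{\Sigma}}\bm{P}\) against \(\bm{P}\bm{G}\bm{P}\)), but they do so only at leading order: with \(\dot{\bm{\Sigma}}=\bm{G}\) the difference is \(-\tfrac12\bm{P}\bm{G}\bm{P}\), which is still of order \(t^{-1}\). We therefore only need an upper bound of order \(t^{-1}\), and an \(\mathcal{O}\) statement requires no cancellation. To show the rate is attained generically, one can also note that the coefficient of \(|\bm{z}|^2\) does not vanish.

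Both estimates then follow by the change of variables \(\bm{x}=\bm{m}+\bm{\Sigma}^{1/2}\bm{z}\), with \(\mathrm{d}\bm{x}=\det(\bm{\Sigma})^{1/2}\mathrm{d}\bm{z}\sim t^{d/2}\mathrm{d}\bm{z}\) and \(\tilde p=\det(2\pi\bm{\Sigma})^{-1/2}e^{-|\bm{z}|^2/2}\sim t^{-d/2}\phi(\bm{z})\):
\[
\int|r|^2\mathrm{d}\bm{x}\lesssim t^{-2}\,t^{-d}\,t^{d/2}\int(1+|\bm{z}|^2)^2\phi(\bm{z})^2\mathrm{d}\bm{z}=\mathcal{O}\big(t^{-(\frac d2+2)}\big),
\]
\[
\int|r|^2\tilde p\,\mathrm{d}\bm{x}\lesssim t^{-2}\,t^{-3d/2}\,t^{d/2}\int(1+|\bm{z}|^2)^2\phi^3\mathrm{d}\bm{z}=\mathcal{O}\big(t^{-(d+2)}\big).
\]
The Gaussian moments are finite, which completes the argument.

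The main obstacle is making the bound on \(Q\) uniform. This means controlling \(\bm{f}\) away from \(\bm{m}\) using only the bound on \(\nabla\bm{f}\), and making precise that \(\bm{\Sigma}\sim t\bm{G}\) yields \(\|\bm{\Sigma}^{-1}\|=\mathcal{O}(t^{-1})\), \(\det\bm{\Sigma}\asymp t^d\) and a bounded \(\dot{\bm{\Sigma}}\). I would state these as the interpretation of the hypothesis. For the linearized process they follow from \eqref{eqn:mean_var_of_linearized_SDE} by expanding \(e^{\nabla\bm{f}(\bm{x}_0)s}=\bm{I}+\mathcal{O}(s)\).
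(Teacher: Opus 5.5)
Your proposal is correct and is essentially the paper's proof. Both write $r=\tilde p\,\tilde r$ with $\tilde r$ explicit, bound $|\tilde r|$ by a polynomial in $|\bm{x}-\bm{m}|$ whose coefficients are inverse powers of $t$ (using $|\bm{f}(\bm{x})|\le|\bm{f}(\bm{m})|+C_{\bm{f}}|\bm{x}-\bm{m}|$ for the drift term), and integrate against $\tilde p^2$ or $\tilde p^3$ using Gaussian scaling. The only difference is bookkeeping: you standardize via $\bm{z}=\bm{\Sigma}^{-1/2}(\bm{x}-\bm{m})$, while the paper rewrites $\tilde p^2,\tilde p^3$ as multiples of $\mathcal{N}(\bm{m},\bm{\Sigma}/2)$, $\mathcal{N}(\bm{m},\bm{\Sigma}/3)$ and evaluates the moments of $|\bm{x}-\bm{m}|$ through traces of $\bm{\Sigma}$. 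Your aside on cancellation is not needed for the upper bound, and it is muddled: if $\bm{\Sigma}\approx t\bm{g}\bm{g}^\top=2t\bm{G}$, as for the linearized SDE, the leading $t^{-1}$ terms do cancel, which only improves the bound.
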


According to Proposition \ref{prop:residual_approx}, although the approximation may be close to the target transition PDF for small \(t\), the integral of the residual over the spatial domain can scale as \(t^{-(d+2)}\) or \(t^{-(\frac{d}{2}+2)}\) as \(t\to 0\) depending on the sampling measure \(\rho(\bm{x}|\bm{x}_0,t)\), which potentially leads to a blow-up.
This motivates us to define a time-weighted loss function. Incorporating the adaptive sampling strategy, we rewrite the loss function at the \(k-\)th stage as a sum of three parts:
\vspace{-3pt}
\begin{equation*}
\begin{split}
    {\mathcal{L}^k}(\bm{\theta})
    =&\,\gamma_1\int_{\Omega\times\Omega_0\times(0, t_f]}|r(\bm{x},\bm{x}_0, t;p_{\bm{\theta}})|^2\mathrm{d}\mu(\bm{x})\mathrm{d}\rho(t|\bm{x}_0)\mathrm{d}\rho(\bm{x}_0)\\
    &\, + \gamma_2\int_{\Omega\times\Omega_0\times(0, t_f]}|r(\bm{x},\bm{x}_0, t;p_{\bm{\theta}})|^2\mathrm{d}\rho_{k-1}(\bm{x}|\bm{x}_0,t)\mathrm{d}\rho(t|\bm{x}_0)\mathrm{d}\rho(\bm{x}_0)\\
    &\, + \gamma_3\int_{\Omega\times\Omega_0\times(0, t_f]}|r(\bm{x},\bm{x}_0, t;p_{\bm{\theta}})|^2\mathrm{d}\rho_{\text{C-KRnet},\bm{\theta}}(\bm{x},t|\bm{x}_0)\mathrm{d}\rho(t|\bm{x}_0)\mathrm{d}\rho(\bm{x}_0).
\end{split}
\end{equation*}

We apply weighting functions \(w_1(t)=t^{\frac{d}{2}+2}\) to the first part, and \(w_2(t)=t^{d+2}\) to the last two parts. The resulting weighted loss function is then defined as
\begin{equation*}
\begin{split}
    {\mathcal{L}}_{w}^k(\bm{\theta})
    =&\, \gamma_1\int_{\Omega\times\Omega_0\times(0, t_f]}|r(\bm{x},\bm{x}_0, t;p_{\bm{\theta}})|^2t^{(\frac{d}{2}+2)}\mathrm{d}\mu(\bm{x})\mathrm{d}\rho(t|\bm{x}_0)\mathrm{d}\rho(\bm{x}_0)\\
    & +\gamma_2\int_{\Omega\times\Omega_0\times(0, t_f]}|r(\bm{x},\bm{x}_0, t;p_{\bm{\theta}})|^2t^{(d+2)}\mathrm{d}\rho_{k-1}(\bm{x}|\bm{x}_0, t)\mathrm{d}\rho(t|\bm{x}_0)\mathrm{d}\rho(\bm{x}_0)\\
    &+\gamma_3\int_{\Omega\times\Omega_0\times(0, t_f]}|r(\bm{x},\bm{x}_0, t;p_{\bm{\theta}})|^2t^{(d+2)}\mathrm{d}\rho_{\text{C-KRnet}}(\bm{x}|\bm{x}_0, t)\mathrm{d}\rho(t|\bm{x}_0)\mathrm{d}\rho(\bm{x}_0).
\end{split}
\end{equation*}

We emphasize that the usage of the linearized SDE respects the causality, and the weighting functions introduced here not only help prevent blow-up but also assign greater weight to more challenging training regions, making our method well-suited for the transition FPE.

We introduce an indicator variable to distinguish samples drawn from different distributions. Specifically, let \(N=N_1+N_2+N_3\) with proportions \(N_1:N_2:N_3=\gamma_1:\gamma_2:\gamma_3\).
Define \(S_1=\left\{\left(\bm{x}^i, \bm{x}_0^i, t^i, \eta^i=0\right)\right\}_{i=1}^{N_1}\), where \(\left(\bm{x}^i, \bm{x}_0^i, t^i\right)\) are sampled from \(\mu(\bm{x})\rho(t|\bm{x}_0)\rho(\bm{x}_0)\). Similarly, define
\(S_2=\left\{\left(\bm{x}^i, \bm{x}_0^i, t^i, \eta^i=1\right)\right\}_{i=N_1+1}^{N_1+N_2}\) where \(\left\{\left(\bm{x}^i, \bm{x}_0^i, t^i\right)\right\}\) are drawn from \(\rho_{k-1}(\bm{x}|\bm{x}_0,t)\rho(t|\bm{x}_0)\rho(\bm{x}_0)\). Define \(S_3=\left\{\left(\bm{x}^i, \bm{x}_0^i, t^i, \eta^i=1\right)\right\}_{i=N_1+N_2+1}^{N}\) where \(\left\{\left(\bm{x}^i, \bm{x}_0^i, t^i \right)\right\}\) are sampled from 
\(\rho_{\text{C-KRnet}}(\bm{x}|\bm{x}_0,t)\rho(t|\bm{x}_0)\rho(\bm{x}_0)\). Let \(S=S_1\cup S_2\cup S_3\). The empirical time-weighted loss function is then defined as:
\begin{equation}
	\label{eqn:emp_loss}
	\widehat{\mathcal{L}}_{w}^k(\bm{\theta}; S) \coloneqq \frac{1}{N}\sum_{i=1}^{N}\vert r (\bm{x}^i,\bm{x}^i_0,t^i; p_{\bm{\theta}})\vert ^2\left(\eta^i t^{d+2}+(1-\eta^i)t^{\frac{d}{2}+2}\right).
\end{equation}
The optimal parameter $\bm{\theta}^*$ can be obtained via solving the optimization problem:
\begin{equation}
\bm{\theta}^*\in\arg\min_{\bm{\theta}}\widehat{\mathcal{L}}_{w}(\bm{\theta};\tilde{S}).
\label{eqn:opt_theta}
\end{equation}

The adaptive training process is summarized in \cref{alg:operator_learning}. The training set is divided into several mini-batches, and the Adam optimizer is used to update the parameters \(\bm{\theta}\). The training set \(S\) is updated according to the strategy \eqref{eqn:sampling_measure}.

\begin{algorithm}
	\caption{Solving the transition FPE with various initial states}
	\label{alg:operator_learning}
	\begin{algorithmic}
		 \STATE \textbf{Input:} maximum epoch number $N_e$, maximum iteration number $N_{\mathrm{adaptive}},$ rate $\gamma_1, \gamma_2, \gamma_3$, initial learning rate $l_r$, decay rate $\eta$, step size $n_s$, the number of the training samples \(N\).
		\FOR {$k=0,\cdots,N_{\mathrm{adaptive}}$}
            \IF {\(k=0\)}
            \STATE \(N_1=\lfloor\frac{\gamma_1}{\gamma_1+\gamma_3}*N\rfloor\), \(N_3=N-N_1\),
            \STATE Generate \(S_1=\{(\bm{x}^i, \bm{x}_0^i, t^i, 0)\}_{i=1}^{N_1}\) with \((\bm{x}^i, \bm{x}_0^i, t^i)\sim \mu(\bm{x})\rho(t|\bm{x}_0)\rho(\bm{x}_0)\),
            \STATE   \(S_3=\{(\bm{x}^i, \bm{x}_0^i, t^i, 1)\}_{i=N_1+1}^{N}\) with \((\bm{x}^i, \bm{x}_0^i, t^i)\sim \rho_{\text{C-KRnet},\bm{\theta}}(\bm{x},t|\bm{x}_0)\rho(t|\bm{x}_0)\rho(\bm{x}_0)\).
           
           Initialize the training set \(S=S_1\cup S_3\).
            \ELSE 
            \STATE \(N_1=\lfloor\gamma_1*N\rfloor\), \(N_2=\lfloor\gamma_2*N\rfloor\), \(N_3=N-N_1-N_2\),
            \STATE  Generate \(S_1=\{(\bm{x}^i, \bm{x}_0^i, t^i, 0)\}_{i=1}^{N_1}\) with \((\bm{x}^i, \bm{x}_0^i, t^i)\sim \mu(\bm{x})\rho(t|\bm{x}_0)\rho(\bm{x}_0)\),
            \STATE  \(S_2\) consists of \(N_2\) random samples drawn from \(S\).
            \STATE  \(S_3=\{(\bm{x}^i, \bm{x}_0^i, t^i, 1)\}_{i=N_1+N_2+1}^{N}\), \((\bm{x}^i, \bm{x}_0^i, t^i)\sim \rho_{\text{C-KRnet},\bm{\theta}}(\bm{x},t|\bm{x}_0)\rho(t|\bm{x}_0)\rho(\bm{x}_0)\).
            
            \STATE Update the training set: \(S=S_1\cup S_2\cup S_3\).
            \ENDIF
		\FOR {$j=1,\cdots,N_e$}
            \IF{$((k-1)N_e + j)\%n_s==0$}
		\STATE $l_r=\eta*l_r$.
		\ENDIF 
		\STATE Divide $S$ into $n$ mini-batches  $\{S^{ib}\}_{ib=1}^n$ randomly.
		\FOR{$ib = 1,\cdots,n$}
		\STATE Compute the loss function \eqref{eqn:emp_loss} $\widehat{\mathcal{L}}_{w}(\bm{\theta}; S^{ib})$.
		\STATE Update $\bm{\theta}$ by using the Adam optimizer.
		\ENDFOR
		\ENDFOR 
		\ENDFOR 
		\STATE \textbf{Output:} The predicted solution $p_{\text{C-KRnet},\bm{\theta}}(\bm{x},t|\bm{x}_0)$.
	\end{algorithmic}
\end{algorithm}

\section{Importance sampling-based approximation of \(p(\cdot,t)\)}\label{sec:IS}

Once the transition PDF is available, a natural way to approximate the solution of equation \eqref{eqn:tFK} is by approximating the integral \eqref{eq:p_int} through the Monte Carlo method, 
\begin{equation}
  p(\bm{x},t)\approx \frac{1}{M}\sum_{i=1}^M p(\bm{x},t|\bm{x}_0^i), \quad \bm{x}_0^i\sim p_0(\bm{x}),
  \label{eq:p_int_MC}
  \end{equation}
  where $\{\bm{x}_0^i\}_{i=1}^M$ are samples from the initial distribution \(p_0\).
However, the transition PDF approximates the Dirac delta function as \(t\to 0\). Consequently, a direct Monte Carlo method may incur large errors due to high variance. Alternatively, we apply importance sampling, leveraging the knowledge of the transition PDF. 
\begin{equation}
    \label{eqn:important_sampling}
    \begin{aligned}    
    p(\bm{x},t)
    &=\int_{\Omega_0}\frac{p(\bm{x},t|\bm{x}_0)p_0(\bm{x}_0)}{q(\bm{x}_0|\bm{x},t)}q(\bm{x}_0|\bm{x},t)\mathrm{d}\bm{x}_0\\
    &\approx \hat{p}_{\bm{\theta}}(\bm{x},t)\coloneqq\frac{1}{M}\sum_{i=1}^M\frac{p(\bm{x},t|\bm{x}_0^i)p_0(\bm{x}_0^i)}{q(\bm{x}_0^i|\bm{x},t)},\quad \bm{x}_0^i\sim q(\cdot|\bm{x},t).
    \end{aligned}
\end{equation}

Recall that given an initial state \(\bm{x}_0\) and time \(t\), \(\bm{x}\) concentrates around the initial state \(\bm{x}_0\) for small time. The corresponding base distribution used in our conditional normalizing flow is
\begin{equation*}
    p_{\widehat{X}_t}(\bm{x},t|\bm{x}_0)=\mathcal{N}\left(\bm{x}; \bm{x_0}+\int_0^te^{\nabla\bm{f}(\bm{x}_0)(t-s)}\bm{f}(\bm{x}_0)\mathrm{d}s, \bm{\Sigma}[\widehat{\bm{X}}_t|\widehat{\bm{X}}_0=\bm{x}_0]\right).
\end{equation*}
Let
\begin{equation}
    \label{eqn:important_distribution_base}
    q_1(\bm{x}_0|\bm{x},t)=\mathcal{N}(\bm{x}_0; m(\bm{x},t), \tilde{\bm{\Sigma}}(\bm{x},t)),
\end{equation}
where \(m(\bm{x},t)=\bm{x}-\int_0^te^{\nabla\bm{f}(\bm{x})s}\bm{f}(\bm{x})\mathrm{d}s\), \(\tilde{\bm{\Sigma}}(\bm{x},t)=\int_0^te^{\nabla\bm{f}(\bm{x})s}\bm{g}(\bm{x})\bm{g}(\bm{x})^\top e^{\nabla \bm{f}(\bm{x})^\top s}\mathrm{d}s\). The distribution \(q_1\) is primarily designed to capture the local concentration of the transition PDF and to mitigate the effect of large variance when \(t\) is small. However, the approximation accuracy of \(q_1\) deteriorates as \(t\) increases. As \(t\) grows, the transition PDF becomes increasingly smooth and therefore does not hinder the approximation of \(p(\bm{x},t)\). This motivates incorporating the initial distribution into the proposal. Accordingly, we propose to use a mixture of
\(q_1(\cdot|\bm{x},t)\) and \(p_0(\cdot)\) as the proposal distribution in the importance sampling scheme. Define
\begin{equation}
    \label{eqn:mixture_proposal_t}
    q_2(\bm{x}_0|\bm{x},t)=\alpha(t)q_1(\bm{x}_0|\bm{x},t)+(1-\alpha(t))p_0(\bm{x}_0),
\end{equation}
where \(\alpha(t)\in(0,1)\) is decreasing with respect to \(t\), e.g., \(\alpha(t)=\exp(-at), a>0.\) The set of samples \(\{\bm{x}_0^i\}_{i=1}^M\sim q_2(\bm{x}_0|\bm{x},t)\) can be generated by drawing \(\alpha(t)M\) samples from \(q_1\) and \((1-\alpha(t))M\) samples from \(p_0\).

\section{Numerical experiments}\label{sec:experiments}

In this section, we present three numerical experiments to demonstrate the effectiveness of the proposed approach. The conditional normalizing flow is applied to solve the transition FPE under various initial states. Once obtained, the importance sampling method is used to evaluate \(p(\bm{x},t)\) given an initial distribution. We first consider a benchmark problem where the analytical solution of the transition PDF is available. 
We next apply our method to an SDE with nonlinear drift and constant diffusion. To further demonstrate the robustness, we investigate the case of state-dependent diffusion. 

The experiments are implemented in PyTorch with an initial learning rate of $0.001$. The number of quadrature points in the equation \eqref{Gauss_integral} is set to 10.
To quantify the effectiveness of the proposed method, we consider several metrics.  For some time \(t\), we compute the relative \(L^2\) error of the approximation for the transition PDF \(p(\bm{x},t|\bm{x}_0)\) if the exact transition PDF is known via
\begin{equation}
    \label{eqn:relative_L2_transition_PDF}
    \text{Rel}(p_{\text{C-KRnet}}(\cdot,t|\cdot))=\sqrt{\frac{\sum_{i=1}^{N_v}(p(\bm{x}^i,t|\bm{x}_0^i)-p_{\text{C-KRnet}}(\bm{x}^i,t|\bm{x}_0^i))^2}{\sum_{i=1}^{N_v}p(\bm{x}^i,t|\bm{x}_0^i)^2}},
\end{equation}
where 
\(\{(\bm{x}_0^i,\bm{x}^i)\}_{i=1}^{N_v}\subset \Omega_0\times\Omega\) denotes the validation dataset. 
When approximating the solution of the FPE given an initial distribution \(\rho_0(\bm{x})\), we consider the relative \(L^2\) error of the approximation \(\hat{p}_{\bm{\theta}}(\bm{x},t)\) of \(p(\bm{x},t)\), as well as the maximum mean discrepancy (MMD) \cite{gretton2012kernel}. The relative \(L^2\) error reads
\begin{equation}
    \label{eqn:relative_L2_PDF}
     \text{Rel}(\hat{p}_{\bm{\theta}}(\cdot,t))=\sqrt{\frac{\sum_{i=1}^{N_v}(p(\bm{x}^i,t)-\hat{p}_{\bm{\theta}}(\bm{x}^i,t))^2}{\sum_{i=1}^{N_v}p(\bm{x}^i,t)^2}},
\end{equation}
where 
\(\{\bm{x}^i\}_{i=1}^{N_v}\subset \Omega\) denotes the validation dataset.
When the analytical form of the transition PDF is available, we employ Gauss–Kronrod quadrature rule to obtain the reference solution \(p(\bm{x},t)\) for a given initial distribution. 
In the absence of an analytical solution, the alternating-direction implicit (ADI) finite difference scheme \cite{pichler2013numerical} with a time step of 0.001 and a spatial discretization size of 0.01 is used to compute the reference solution for cases where the initial distributions are continuous over the entire domain \(\mathbb{R}^2\). An MMD is introduced in particular to assess the effectiveness of the proposed method for discontinuous initial distributions. 
\begin{equation}
    \label{eqn:MMD}
    \text{MMD}^2(t)=\frac{1}{M_1^2}\sum_{i,j=1}^{M_1}K(\bm{x}^i_t,\bm{x}^j_t)-\frac{2}{M_1M_2}\sum_{i,j=1}^{M_1,M_2}K(\bm{x}^i_t,\bm{y}^j_t)+\frac{1}{M_2^2}\sum_{i,j=1}^{M_2}K(\bm{y}^i_t,\bm{y}^j_t),
\end{equation}
\begin{equation*}
    K(\bm{x},\bm{y})=\frac{1}{3}\exp\left(-\frac{4\|\bm{x}-\bm{y}\|^2}{\hat{\sigma}^2}\right)+\frac{1}{3}\exp\left(-\frac{\|\bm{x}-\bm{y}\|^2}{\hat{\sigma}^2}\right)+\frac{1}{3}\exp\left(-\frac{\|\bm{x}-\bm{y}\|^2}{4\hat{\sigma}^2}\right).
\end{equation*}
\(\{\bm{x}^i_t\}\) and \(\{\bm{y}^i_t\}\) are samples at time \(t\) obtained using the Euler-Maruyama scheme with a time step of \(0.001\) and generated by the normalizing flow, respectively, with initial distribution \(p_0(\bm{x})\).
\(\hat{\sigma}\) is the median distance between 
\(\{\bm{x}^i_t\}\) and \(\{\bm{y}^i_t\}\). 

\subsection{Bene\v{s} SDE}
We start with demonstrating the performance of our method on the Bene\v{s} SDE, 
\begin{equation}
\label{eqn:ex_Benes_SDE}
\left\{\begin{split}
&\mathrm{d}\bm{X}_t=
	\tanh \bm{X}_t\mathrm{d}t+\bm{I}_d\mathrm{d}\bm{W}_t,\\
 &   \bm{X}_0=\bm{x}_0, \quad \bm{x}_0\in[-1,1]^d,
\end{split}\right.
\end{equation}
for which the corresponding transition PDF admits an analytical solution,
\begin{equation*}
p(\bm{x},t|\bm{x}_{0})=\frac{1}{(2\pi t)^{d/2}}\exp\left(-\frac{d}{2}t\right)\exp\left(-\frac{1}{2t}\left\|\bm{x}-\bm{x}_0\right\|^2\right)\prod_{i=1}^d\frac{\cosh x_i}{\cosh x_{0,i}}.
\end{equation*}

\subsubsection{Two-dimensional case}

 We first consider the two-dimensional case. The conditional normalizing flow is composed of eight coupling layers. The parameters of each coupling layer are generated by a neural network with 32 random Fourier features and two hidden layers with 32 neurons. Training is performed for 10 adaptive iterations, each with 300 epochs. The Adam optimizer is used with an initial learning rate of 0.001, which is halved every 2000 epochs. \(2\times 10^5\) training points are employed with a batch size of \(10^4\) each epoch. A validation dataset of $10^6$ samples is used to compute relative errors throughout the training process.

We first investigate the approximation performance of the transition PDF for various initial states, using the following five types of loss functions:

\begin{enumerate}
    \item Vanilla PINN, \(t\in(0, 1.5)\), \(\gamma_1=0.2, \gamma_2=0.6, \gamma_3=0.2\);
    \item Vanilla PINN, \(t\in (0.1, 1.5)\), \(\gamma_1=0.2, \gamma_2=0.6, \gamma_3=0.2\);
    \item Time-weighted loss function, \(t\in(0, 1.5)\), \(\gamma_1=0.2, \gamma_2=0.6, \gamma_3=0.2\).
\end{enumerate}

\begin{figure}[htbp]
	\centering
	\begin{minipage}[b]{0.34\linewidth}
		\includegraphics[height=3.3cm,width=4.2cm]{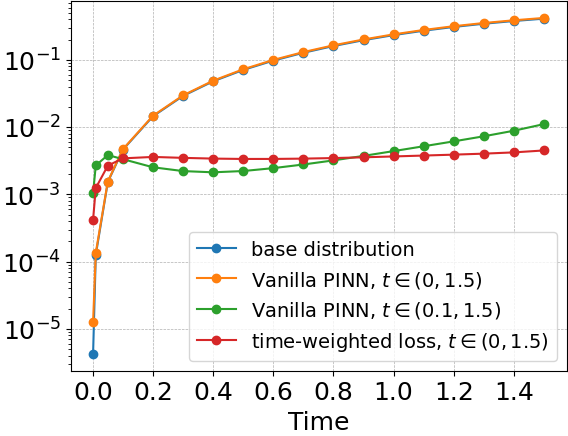}
		\subcaption{The relative \(L^2\) error.}
        \label{fig:Benes_relative_error_comp}
	\end{minipage}
	\begin{minipage}[b]{0.34\linewidth}
		\includegraphics[height=3.3cm,width=4.2cm]{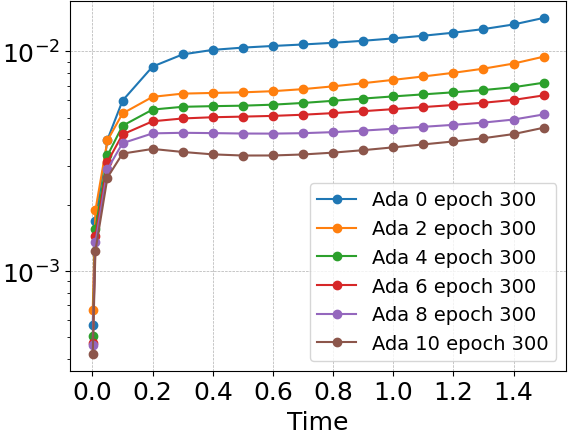}
		\subcaption{Time-weighted loss.}
        \label{fig:Benes_relative_error_adapt}
	\end{minipage}
    \caption{Left: \(\text{Rel}(p_{\text{C-KRnet}}(\cdot,t|\cdot))\) for different loss functions. Right: the decay of \(\text{Rel}(p_{\text{C-KRnet}}(\cdot,t|\cdot))\) against the adaptive iterations for time-weighted loss.}
\end{figure}

We present the \(\text{Rel}(p_{\text{C-KRnet}}(\cdot,t|\cdot))\) \eqref{eqn:relative_L2_transition_PDF} in \cref{fig:Benes_relative_error_comp}, where the validation dataset \(\{\bm{x}_0,\bm{x}\}\) consists of \(10^5\) samples drawn uniformly from \([-1,1]^2\times[-5,5]^2\). The blue curve, which overlaps with the orange one, represents the relative error of the base distribution. While it achieves high accuracy as \(t\to 0\), its approximation quality deteriorates as \(t\) increases. A similar trend is observed when applying vanilla PINN. Although the base distribution stays close to the ground truth and the conditional normalizing flow remains close to the identity at small times, the corresponding residual at small times may still dominate, so the approximation accuracy is nearly the same as that of the base distribution.
If we exclude small times by setting the minimum training time to \(t=0.1\), the PDE constraints are no longer imposed near the initial condition. In this case, the approximation relies primarily on the continuity of the flow, and the accuracy improves for \(t>0.1\). By contrast, our method exploits the information from the base distribution and provides accurate approximations even as \(t\) increases, owing to the expressiveness of the conditional normalizing flow and the more balanced PINN's loss. 
Notably, we do not observe the typical error growth with time that often occurs in time-dependent problems. This highlights the effectiveness of the time-weighted loss function in mitigating error accumulation over time. \cref{fig:Benes_relative_error_adapt} further illustrates the decay of the \(\text{Rel}(p_{\text{C-KRnet}}(\cdot,t|\cdot))\)
across adaptive iterations for the time-weighted loss function. Leveraging samples from the solution itself allows the model to capture the local behavior of the solution and enhances training efficiency. 

\begin{figure}[htbp]
\begin{minipage}[b]{0.185\linewidth}
    \begin{overpic}[height=2cm,width=2.4cm]{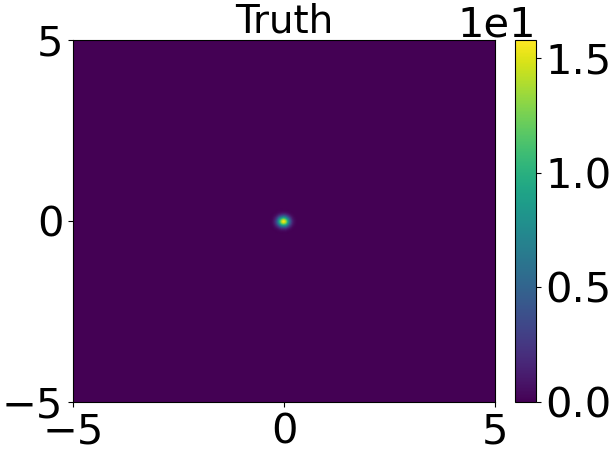}
  \end{overpic}
\end{minipage}
\begin{minipage}[b]{0.185\linewidth}
     \begin{overpic}[height=2cm,width=2.3cm]{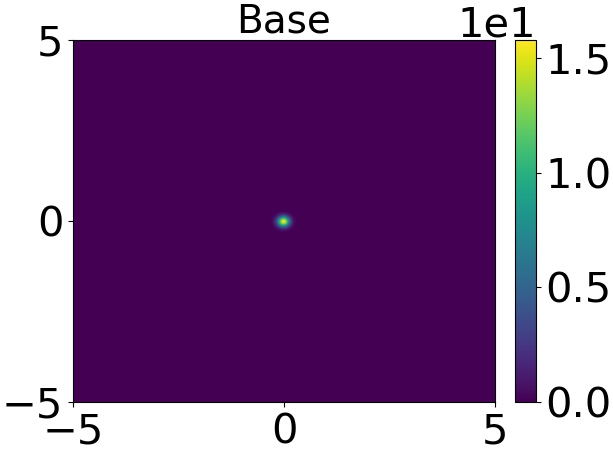}
  \end{overpic}
\end{minipage}
\begin{minipage}[b]{0.185\linewidth}
    \begin{overpic}[height=2cm,width=2.3cm]{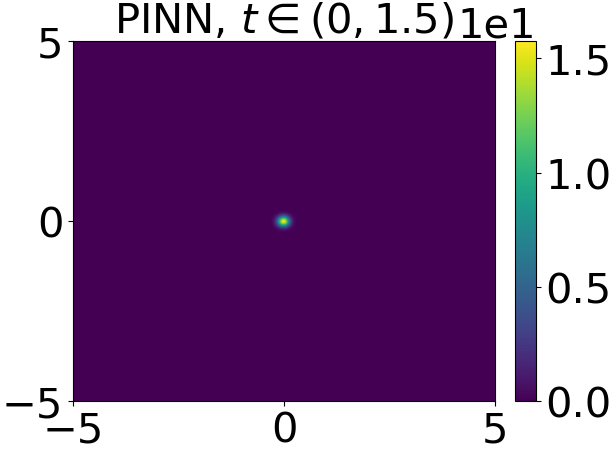}
  \end{overpic}
\end{minipage}
\begin{minipage}[b]{0.185\linewidth}
     \begin{overpic}[height=2cm,width=2.3cm]{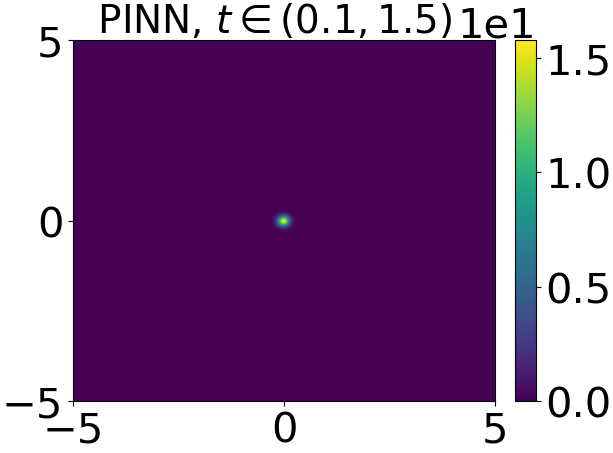}
  \end{overpic}
\end{minipage}
\begin{minipage}[b]{0.185\linewidth}
     \begin{overpic}[height=2cm,width=2.3cm]{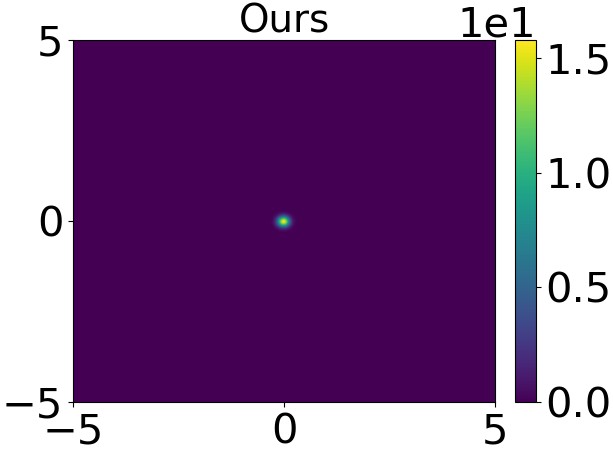}
  \end{overpic}
\end{minipage}

\hspace{70pt}
\begin{minipage}[b]{0.185\linewidth}
    \includegraphics[height=2cm,width=2.1cm]{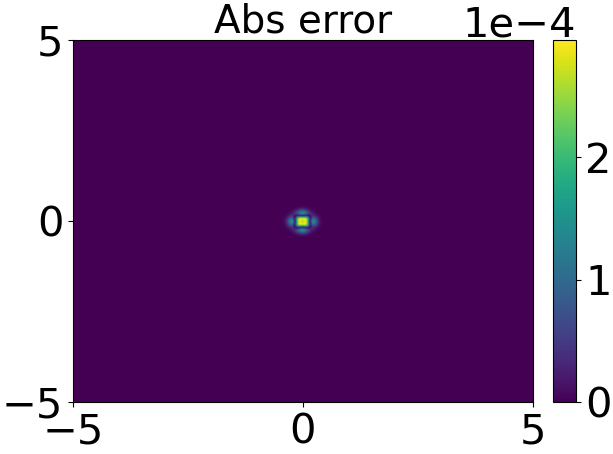}
\end{minipage}
\begin{minipage}[b]{0.185\linewidth}
    \includegraphics[height=2cm,width=2.1cm]{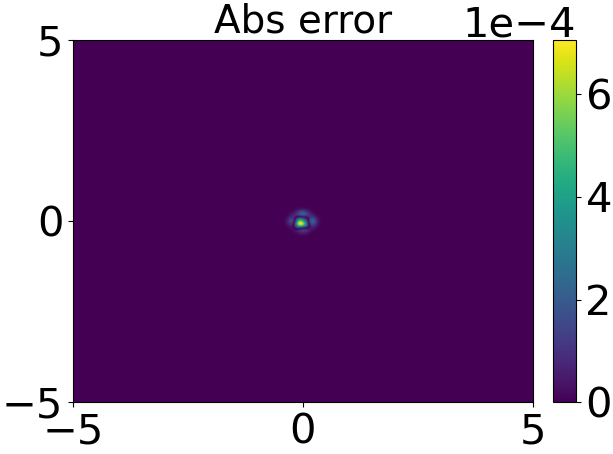}
\end{minipage}
\begin{minipage}[b]{0.185\linewidth}
    \includegraphics[height=2cm,width=2.1cm]{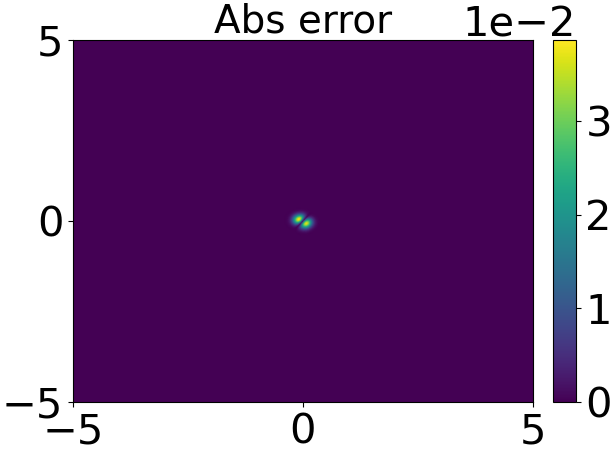}
\end{minipage}
\begin{minipage}[b]{0.185\linewidth}
    \includegraphics[height=2cm,width=2.3cm]{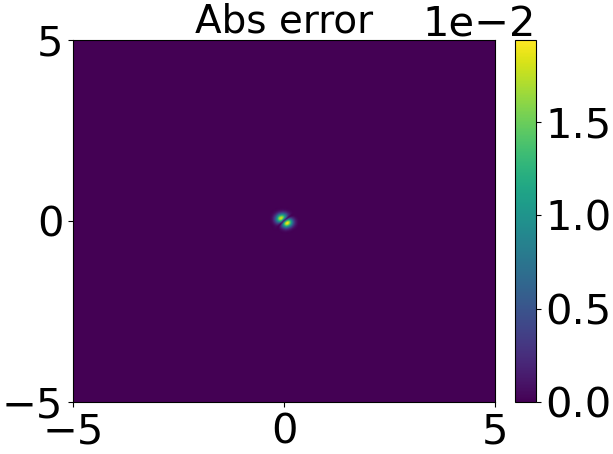}
\end{minipage}
\caption{Solutions (top row) and the absolute errors (bottom row) at \(t=0.01\). The first column shows the ground-truth solution. Columns 2–5 display numerical approximations obtained by, from left to right, the base distribution, Vanilla PINN with \(t\in(0,1.5)\), Vanilla PINN with \(t\in(0.1,1.5)\), time-weighted loss function with \(t\in(0,1.5)\).}
\label{fig:Benes_sol_t001}
\end{figure}

\begin{figure}[h!]

\begin{minipage}[b]{0.185\linewidth}
    \begin{overpic}[height=2cm,width=2.4cm]{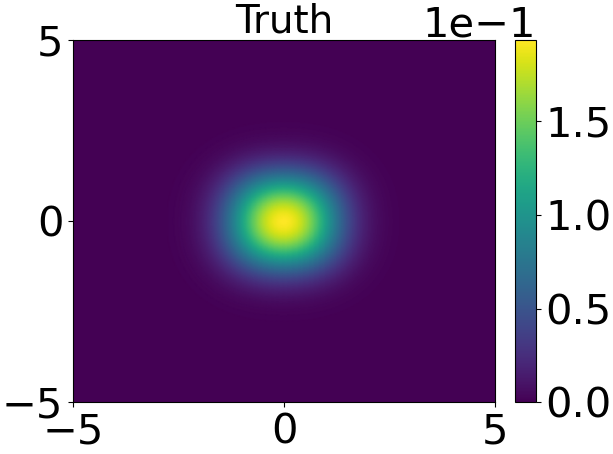}
  \end{overpic}
\end{minipage}
\begin{minipage}[b]{0.185\linewidth}
     \begin{overpic}[height=2cm,width=2.3cm]{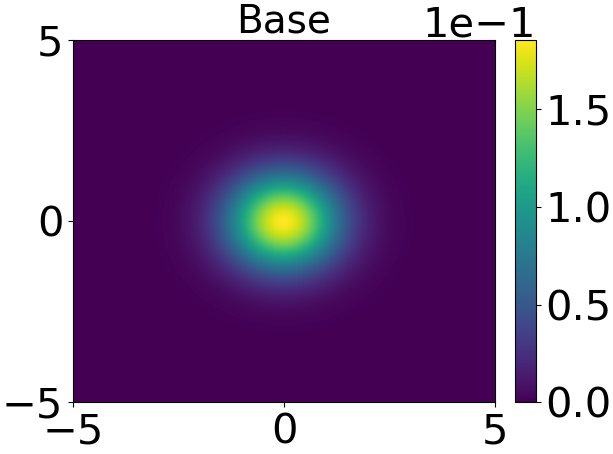}
  \end{overpic}
\end{minipage}
\begin{minipage}[b]{0.185\linewidth}
    \begin{overpic}[height=2cm,width=2.3cm]{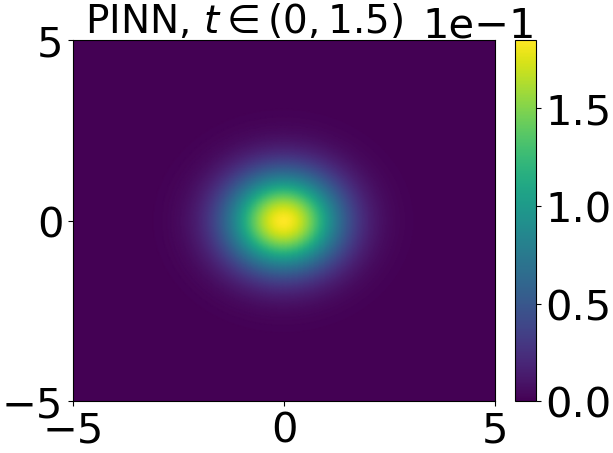}
  \end{overpic}
\end{minipage}
\begin{minipage}[b]{0.185\linewidth}
     \begin{overpic}[height=2cm,width=2.3cm]{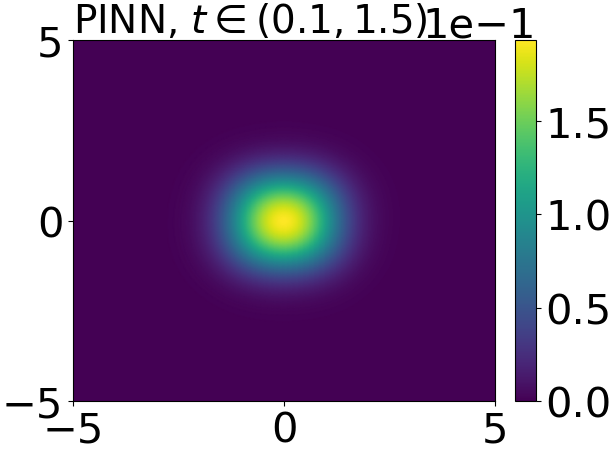}
  \end{overpic}
\end{minipage}
\begin{minipage}[b]{0.185\linewidth}
     \begin{overpic}[height=2cm,width=2.3cm]{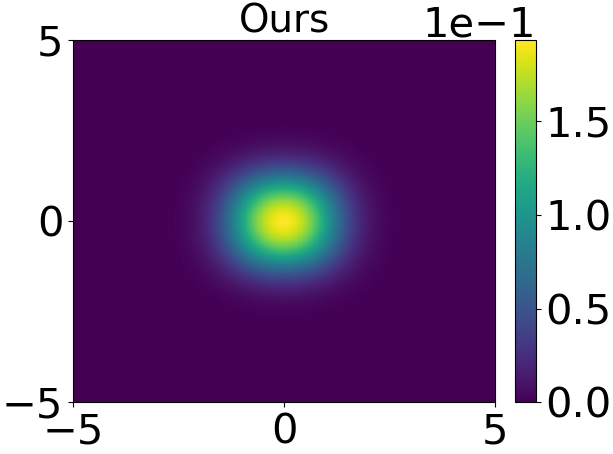}
  \end{overpic}
\end{minipage}

\hspace{70pt}
\begin{minipage}[b]{0.185\linewidth}
    \includegraphics[height=2cm,width=2.3cm]{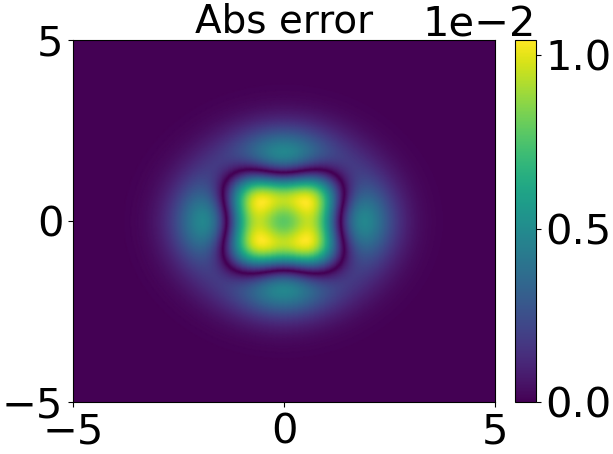}
\end{minipage}
\begin{minipage}[b]{0.185\linewidth}
    \includegraphics[height=2cm,width=2.3cm]{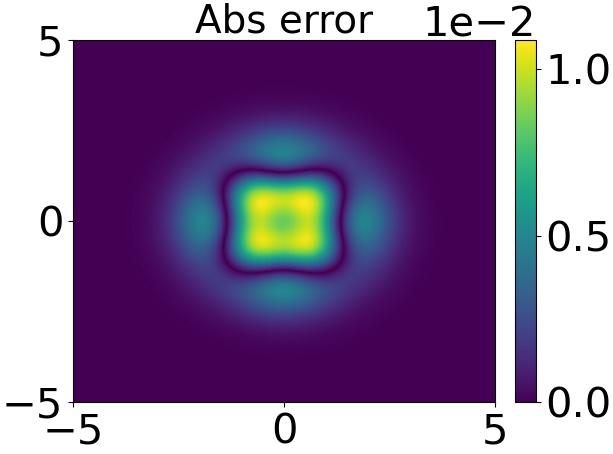}
\end{minipage}
\begin{minipage}[b]{0.185\linewidth}
    \includegraphics[height=2cm,width=2.1cm]{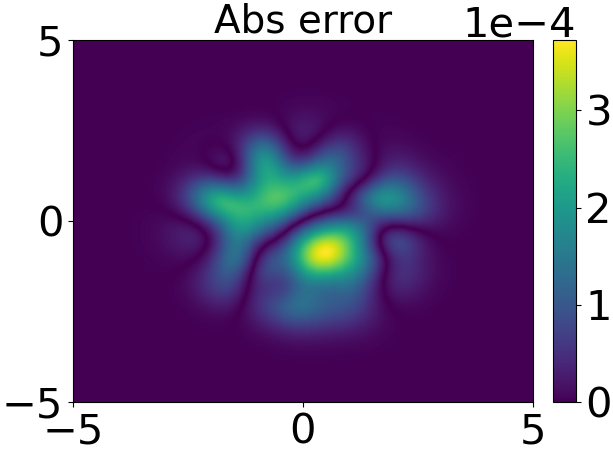}
\end{minipage}
\begin{minipage}[b]{0.185\linewidth}
    \includegraphics[height=2cm,width=2.3cm]{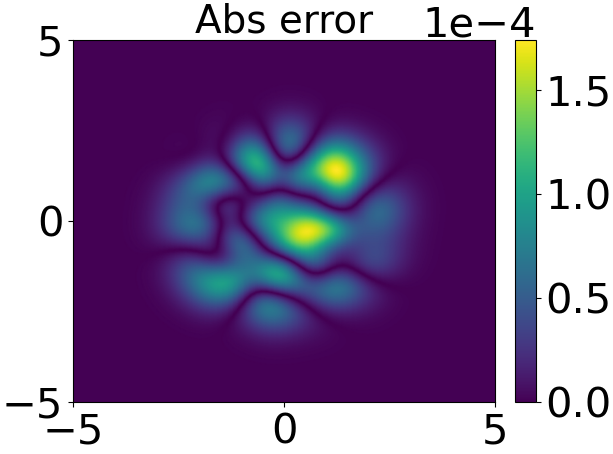}
\end{minipage}
\caption{Same as \cref{fig:Benes_sol_t001} for \(t=0.5\).}
\label{fig:Benes_sol_t05}
\end{figure}

\begin{figure}[h!]
\begin{minipage}[b]{0.185\linewidth}
    \begin{overpic}[height=2cm,width=2.1cm]{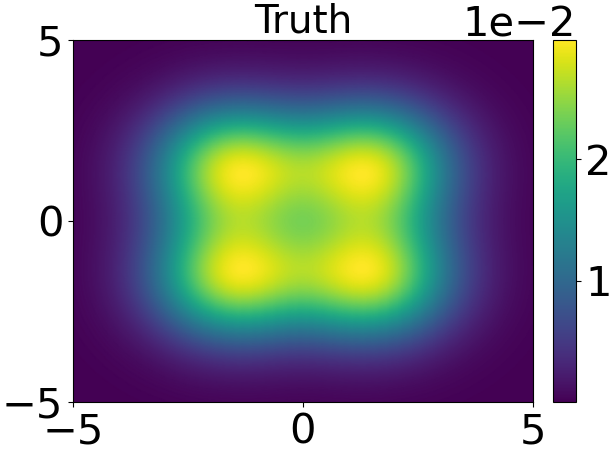}
  \end{overpic}
\end{minipage}
\begin{minipage}[b]{0.185\linewidth}
     \begin{overpic}[height=2cm,width=2.3cm]{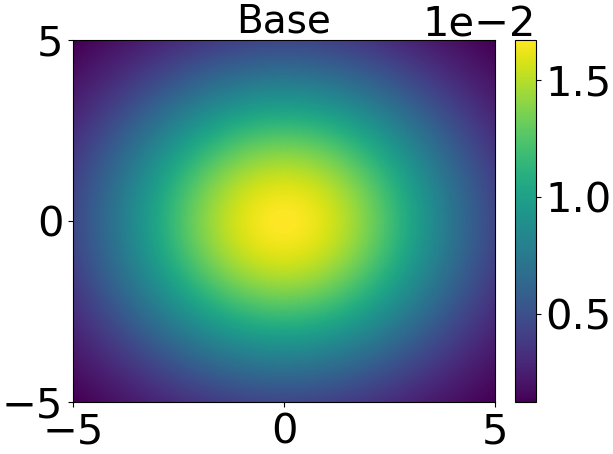}
  \end{overpic}
\end{minipage}
\begin{minipage}[b]{0.185\linewidth}
    \begin{overpic}[height=2cm,width=2.3cm]{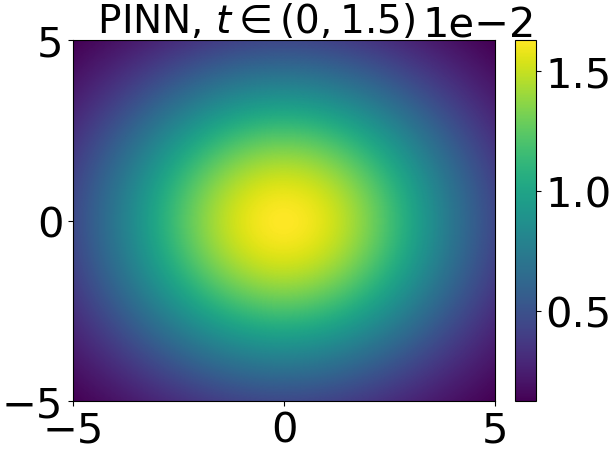}
  \end{overpic}
\end{minipage}
\begin{minipage}[b]{0.185\linewidth}
     \begin{overpic}[height=2cm,width=2.1cm]{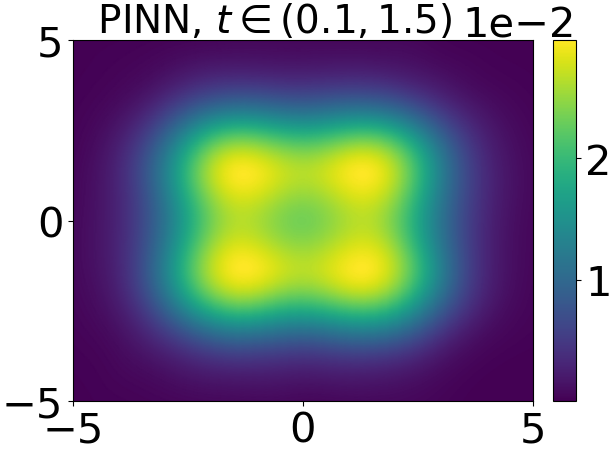}
  \end{overpic}
\end{minipage}
\begin{minipage}[b]{0.185\linewidth}
     \begin{overpic}[height=2cm,width=2.1cm]{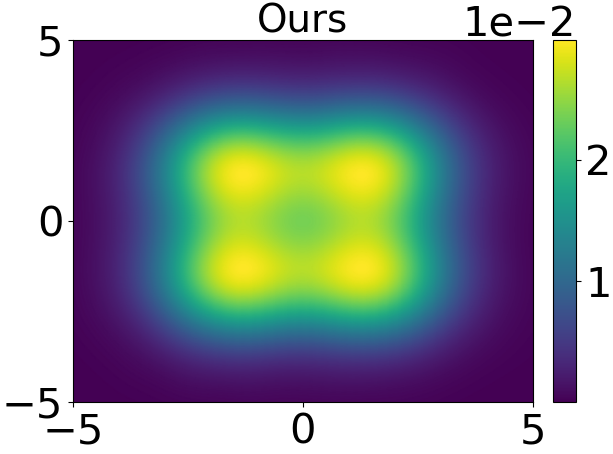}
  \end{overpic}
\end{minipage}

\hspace{70pt}
\begin{minipage}[b]{0.185\linewidth}
    \includegraphics[height=2cm,width=2.3cm]{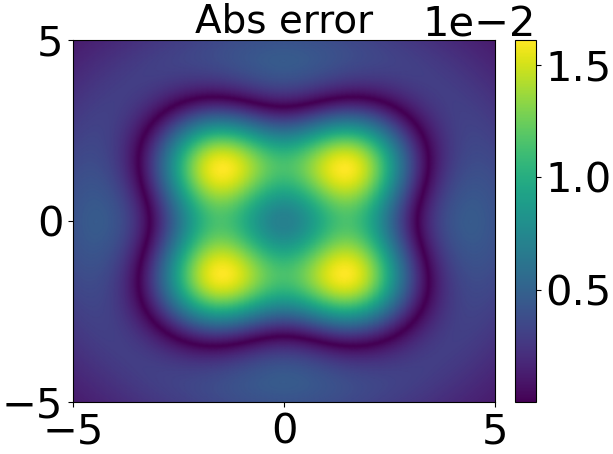}
\end{minipage}
\begin{minipage}[b]{0.185\linewidth}
    \includegraphics[height=2cm,width=2.3cm]{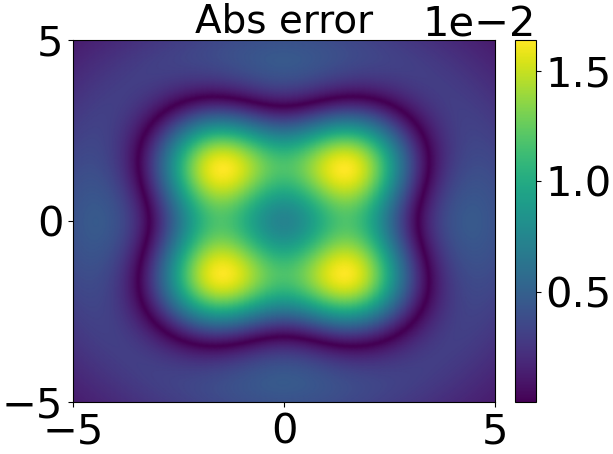}
\end{minipage}
\begin{minipage}[b]{0.185\linewidth}
    \includegraphics[height=2cm,width=2.1cm]{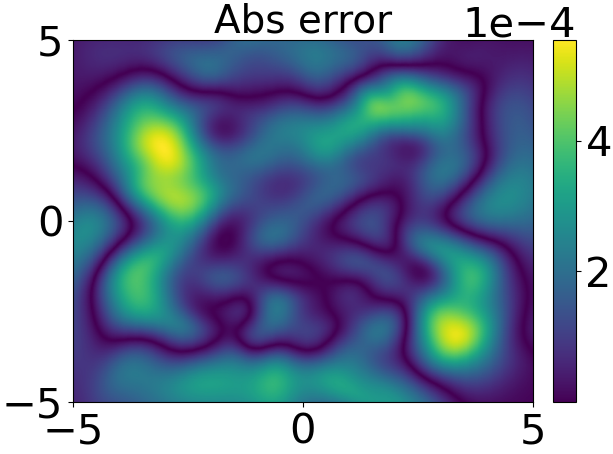}
\end{minipage}
\begin{minipage}[b]{0.185\linewidth}
    \includegraphics[height=2cm,width=2.3cm]{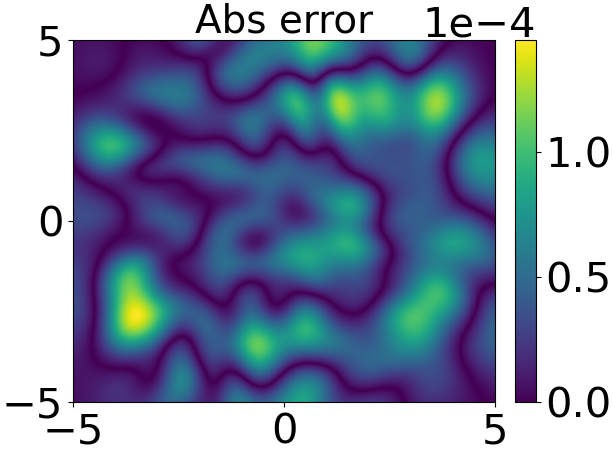}
\end{minipage}
\caption{Same as \cref{fig:Benes_sol_t001} for \(t=1.5\).}
\label{fig:Benes_sol_t15}
\end{figure}

\Cref{fig:Benes_sol_t001}, \cref{fig:Benes_sol_t05}, and \cref{fig:Benes_sol_t15} present the solutions at different times with the initial state \(\bm{x}_0=(0,0)\) computed with the different loss functions. They all align closely to the ground truth except for the vanilla PINN's loss for \(t\in(0,1.5)\). While the base distribution and vanilla PINNs yield much more accurate approximations for small \(t\), their performance degrades progressively as \(t\) increases. Especially, \cref{fig:Benes_sol_t15} demonstrates that the conditional normalizing flow is capable of capturing multi-modal solutions. These results confirm the effectiveness of the proposed approach.

\begin{figure}[htbp]
	\begin{minipage}[b]{0.325\linewidth}
		\includegraphics[height=3.4cm,width=4.2cm]{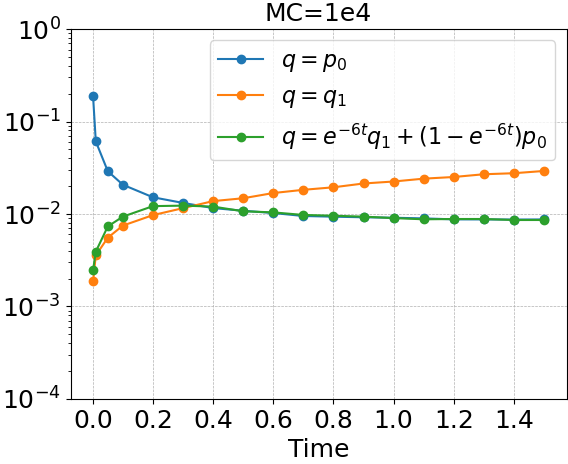}
	\end{minipage}
    \begin{minipage}[b]{0.325\linewidth}
		\includegraphics[height=3.4cm,width=4.2cm]{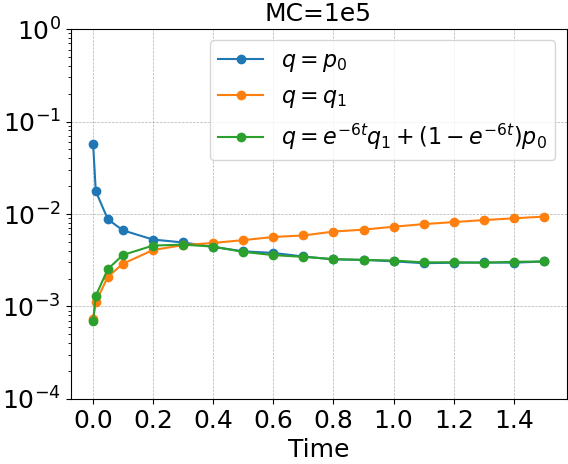}
	\end{minipage}
	\begin{minipage}[b]{0.325\linewidth}
		\includegraphics[height=3.4cm,width=4.2cm]{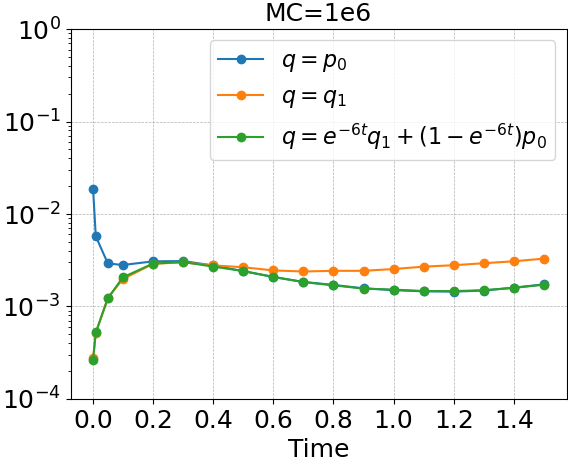}
	\end{minipage}
    \caption{Uniform case: Evolution of \(\text{Rel}(\hat{p}_{\bm{\theta}}(\cdot,t))\) over time. The subplots are arranged from left to right with \(M=10^4, 10^5, 10^6\).}
    \label{fig:Benes_relative_error_uni_MC_t}
\end{figure}

\begin{figure}[htbp]
\centering
\begin{minipage}[b]{0.325\linewidth}
		\includegraphics[height=3.4cm,width=4.2cm]{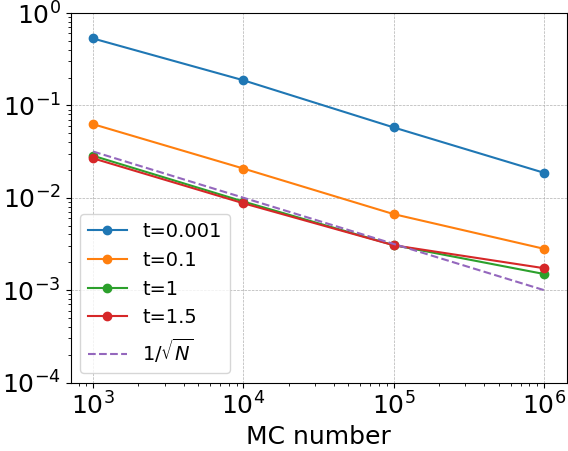}
        \subcaption{\(q=p_0\)}
	\end{minipage}
	\begin{minipage}[b]{0.325\linewidth}
		\includegraphics[height=3.4cm,width=4.2cm]{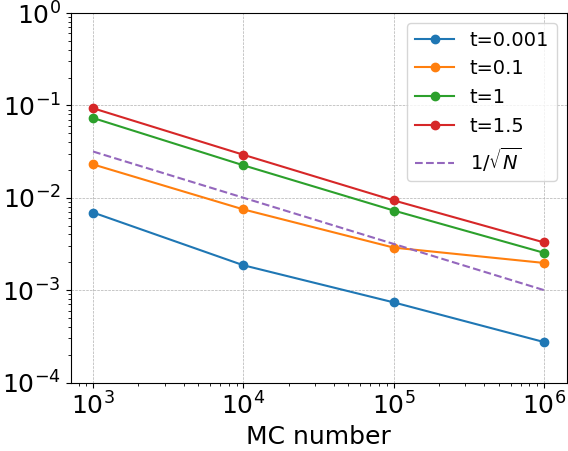}
        \subcaption{\(q=q_1\)}
	\end{minipage}
    \begin{minipage}[b]{0.325\linewidth}
		\includegraphics[height=3.4cm,width=4.2cm]{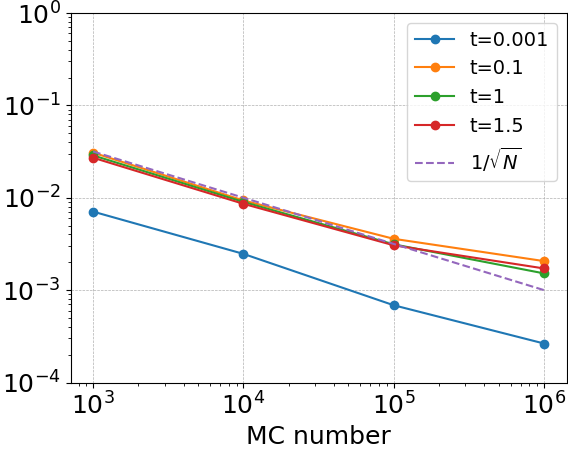}
        \subcaption{\(q=e^{-6t}q_1+(1-e^{-6t})p_0\)}
	\end{minipage}
    \caption{Uniform case: The decay of \(\text{Rel}(\hat{p}_{\bm{\theta}}(\cdot,t))\) with respect to the number of Monte Carlo samples at different times. Left: Draw samples from \(p_0\); Middle: Draw samples from \(q_1\); Right: Draw samples from the mixture of \(p_0\) and \(q_1\).}
    \label{fig:Benes_relative_error_uni_MC}
\end{figure}

We next assess how well the solution of the FPE \(p(\bm{x},t)\) can be approximated under various initial conditions. 
To this end, we employ a variety of Beta distributions on \([-1,1]^d\) as initial conditions, thereby evaluating the effectiveness of the conditional normalizing flow for operator learning problems. More precisely, 
\begin{equation*}
\textbf{Beta}(\bm{x};\bm{\zeta},\bm{\eta})=\prod_{i=1}^d \frac{1}{\bm{B}(\zeta_i, \eta_i)}\left(\frac{1+x_i}{2}\right)^{\zeta_i-1}\left(\frac{1-x_i}{2}\right)^{\eta_i-1}, \bm{x}\in [-1,1]^d,
\end{equation*}
where \(\bm{B}(\zeta_i, \eta_i)\) denotes a beta function. We first explore the uniform case where \(\bm{\zeta}=\bm{\eta}=(1,1)\).
We use the conditional normalizing flow trained with the time-weighted loss as an approximation of the transition PDF.  
\Cref{fig:Benes_relative_error_uni_MC_t} reports the \(\text{Rel}(p_{\bm{\theta}}(\cdot,t))\) \eqref{eqn:relative_L2_PDF} with a validation data set consisting \(10^4\) grid points over \([-5,5]^2\) for different numbers of Monte Carlo samples used in evaluating \(p_{\bm{\theta}}(\bm{x},t)\) \eqref{eqn:important_sampling}. Using \(q_1\) as the proposal distribution yields the most accurate approximation for \(t<0.3\), whereas drawing samples directly from the initial distribution \(p_0\) provides more reliable approximations for \(t\geq 0.4\).  By contrast, adopting \(q_2\), a mixture of \(q_1\) and \(p_0\) with \(\alpha(t) = \exp(-6t)\), offers a balanced compromise, delivering acceptable accuracy across both small and large times. 
\Cref{fig:Benes_relative_error_uni_MC} further illustrates the error decay with respect to the number of Monte Carlo samples. The relative error follows the Monte Carlo error scaling for \(M\leq 10^5\). For a larger Monte Carlo sample size, however, the error in approximating the transition PDF becomes dominant. We also present the variance of the importance-sampling Monte Carlo estimator \(\hat{p}_{\bm{\theta}}(\bm{x},t)\) in \eqref{eqn:important_sampling}
as a function of \(t\) in \cref{fig:Benes_variance} using \(10^4\) samples. The proposal \(q_1\) substantially reduces the variance as \(t\to0\), but its variance increases as \(t\) grows. However, \(q_2\) with different \(\alpha(t)\), which combines \(q_1\) with the initial distribution \(p_0\), yields a more balanced variance behavior over time. We note that traditional methods lose accuracy when the initial distribution is uniform, owing to the discontinuity of the initial distribution over the entire domain, if no special treatment of the discontinuity is taken. In contrast, our proposed methods remain unaffected by this discontinuity, thereby demonstrating their robustness in handling discontinuous initial data.

\begin{figure}[H]
	\centering
	\begin{minipage}[b]{0.35\linewidth}
		\includegraphics[height=3.3cm,width=4.2cm]{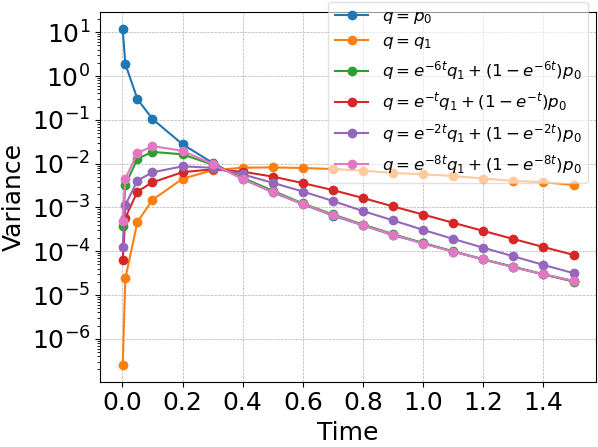}
	\end{minipage}
	\begin{minipage}[b]{0.35\linewidth}
		\includegraphics[height=3.3cm,width=4.2cm]{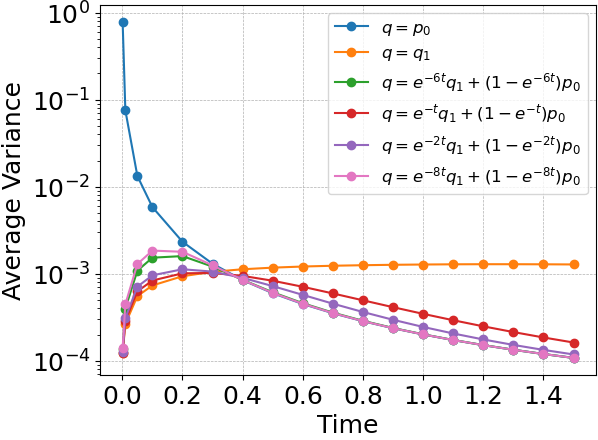}
	\end{minipage}
    \caption{The variance of \(\hat{p}_{\bm{\theta}}(\bm{x},t)\) over time. Left: \(\bm{x}=(0, 0)\). Right: Average variance of the set \(\{\bm{x}^i\}\) consisting \(10^4\) grid points in \([-5,5]^2\).}  
    \label{fig:Benes_variance}
\end{figure}

Moreover, we report in \cref{fig:Benes_various_beta_err} the time evolution of the \(\text{Rel}(p_{\bm{\theta}}(\cdot,t))\) for other Beta distributions chosen as initial conditions using \(q_2\) with \(\alpha(t)=\exp(-6t)\) as the proposal distribution. The numbers of Monte Carlo samples used in evaluating \(\hat{p}_{\bm{\theta}}(\bm{x},t)\) are set to be \(10^4\) and \(10^6\). The results confirm that the proposed method yields reliable approximations.

\begin{figure}[htbp]
	\centering
	\begin{minipage}[b]{0.35\linewidth}
		\includegraphics[height=3.4cm,width=4.2cm]{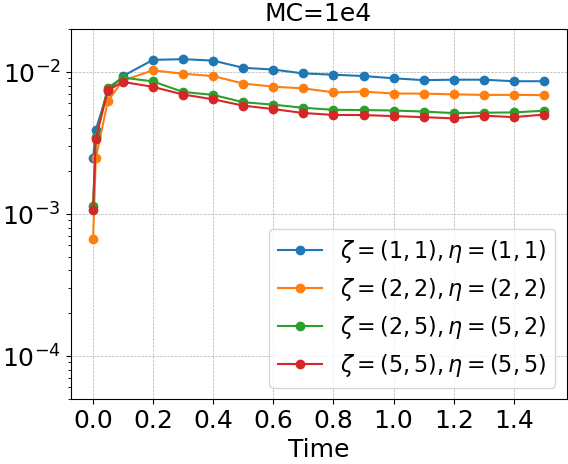}
	\end{minipage}
	\begin{minipage}[b]{0.35\linewidth}
		\includegraphics[height=3.4cm,width=4.2cm]{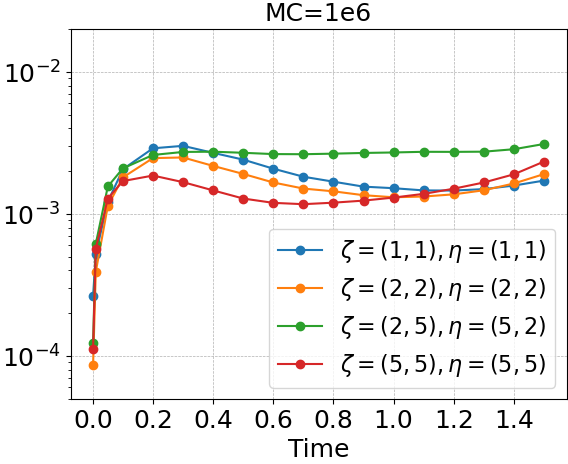}
	\end{minipage}
    \caption{Time evolution of the \(\text{Rel}(\hat{p}_{\bm{\theta}}(\cdot,t))\) for various initial distributions. Left: $M=10^4$; Right: $M=10^6$.}
    \label{fig:Benes_various_beta_err}
\end{figure}

\subsubsection{Four-dimensional case}

We then solve the four-dimensional FPE with various initial distributions using the proposed time-weighted loss function. Moreover, when evaluating the interior residual, we scale the solution by a constant factor, set to \(1000\) in the four-dimensional case, to avoid numerical underflow, since the solution of the FPE becomes very small in higher dimensions.
The conditional normalizing flow follows the KRnet structure with descending active transformation dimensions. It consists of three stages with \(l_1=10, l_2=8\), and \(l_3=6\). The parameters of each coupling layer are determined by a neural network with the same structure as in the two-dimensional case. We apply 20 adaptive iterations, each comprising 300 epochs. 
The updating rates are set to \(\gamma_1=0.2\), \(\gamma_2=0.4\), and \(\gamma_3=0.4\).
The initial learning rate is set to \(0.001\) and 
is halved every 4000 epochs. \(2\times 10^5\) training samples are employed with a batch size of \(2\times 10^4\) per epoch.  

\begin{figure}[htbp]
	\centering
	\begin{minipage}[b]{0.35\linewidth}
		\includegraphics[height=3.4cm,width=4.2cm]{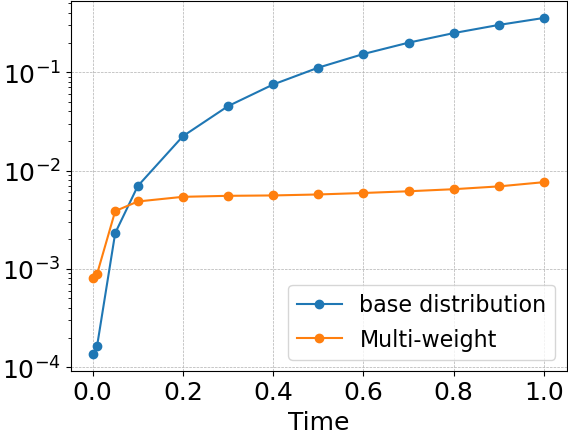}
        \subcaption{\(p(\bm{x},t|\bm{x}_0)\)}
        \label{fig:Benes_high_TPDF_error}
	\end{minipage}
	\begin{minipage}[b]{0.35\linewidth}
		\includegraphics[height=3.4cm,width=4.2cm]{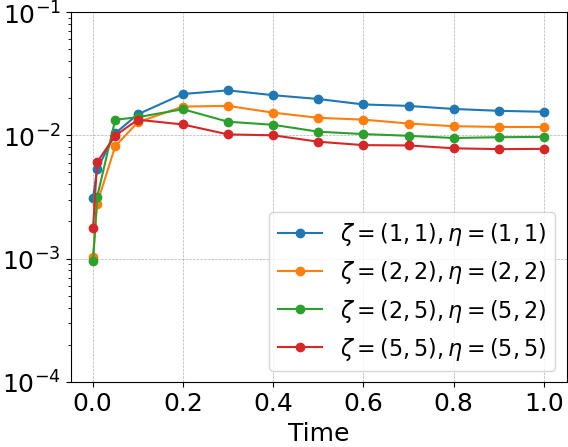}
        \subcaption{\(p(\bm{x},t)\)}
        \label{fig:Benes_high_PDF_error}
	\end{minipage}
    \caption{The relative \(L^2\) errors in the four-dimensional case. Left: \(\text{Rel}(p_{\text{C-KRnet}}(\cdot,t|\cdot))\); Right: \(\text{Rel}(\hat{p}_{\bm{\theta}}(\cdot,t))\) for various initial distributions with $M=10^4$.}
    \label{fig:Benes_high_err}
    \end{figure}

\Cref{fig:Benes_high_TPDF_error} reports the 
\(\text{Rel}(p_{\text{C-KRnet}}(\cdot,t|\cdot))\), 
where the validation dataset \(\{\bm{x}_0, \bm{x}\}\) is uniformly sampled from \([-1,1]^4\times[-5,5]^4\) with a total of \(2\times 10^6\) samples. It
shows that the conditional normalizing flow achieves close agreement with the ground truth. \Cref{fig:Benes_high_PDF_error} presents \(\text{Rel}(\hat{p}_{\bm{\theta}}(\cdot,t))\) for different Beta distributions as initial conditions.
For each initial condition, the numerical solution is obtained using the equation \eqref{eqn:important_sampling}, where \(q=q_2\) is defined in the equation \eqref{eqn:mixture_proposal_t} with \(\alpha(t)=e^{-6t}\) and \(M=10^4\).
A total of \(10^5\) test points are uniformly drawn from \([-5, 5]^4\). 
Together, these results demonstrate the effectiveness of the proposed method for moderately high-dimensional FPEs.

\subsection{SDE with nonlinear drift and constant diffusion}
 In this section, we consider the following SDE with various initial distributions over \([-1,1]^2\),
\begin{equation}
\label{eqn:ex_Nonlinear_constant_diffu}
\mathrm{d}\left(\begin{array}{c}X_1\\
X_2\end{array}\right)=\left(
\begin{array}{c}
2X_2\\
2X_1-0.8X_2-0.2X_1^3
\end{array}\right)\mathrm{d}t+\left(\begin{array}{cc}
    \sqrt{0.4} &  \\
     & \sqrt{0.8}
\end{array}\right)\mathrm{d}\bm{W}_t.
\end{equation}
Both the structure of the conditional normalizing flow and the training setting are the same as in the two-dimensional Bene\v{s} equation.
After obtaining the transition PDF, we approximate the PDF corresponding to various Beta initial distributions using the importance sampling \eqref{eqn:important_sampling} where \(q=q_2\) \eqref{eqn:mixture_proposal_t} with \(\alpha(t)=\exp(-6t)\) and \(M=10^4\).

\Cref{fig:ExampleA_err_various_beta} presents the \(\text{Rel}(\hat{p}_{\bm{\theta}}(\cdot,t))\) \eqref{eqn:relative_L2_PDF} for different initial conditions, which is evaluated using a validation dataset consisting of \(10^4\) uniformly distributed grid points over \([-6,6]^2\). We also report the MMD \eqref{eqn:MMD} with a sample size of \(5\times 10^4\) in \cref{fig:ExampleA_MMD_uniform}. These results demonstrate that the conditional normalizing flow yields highly accurate approximations for small  \(t\), as the base distribution is quite close to the ground truth. It also provides reliable approximations as \(t\) increases. 

\begin{figure}[htbp]
	\centering
	\begin{minipage}[b]{0.35\linewidth}
		\includegraphics[height=3.4cm,width=4.2cm]{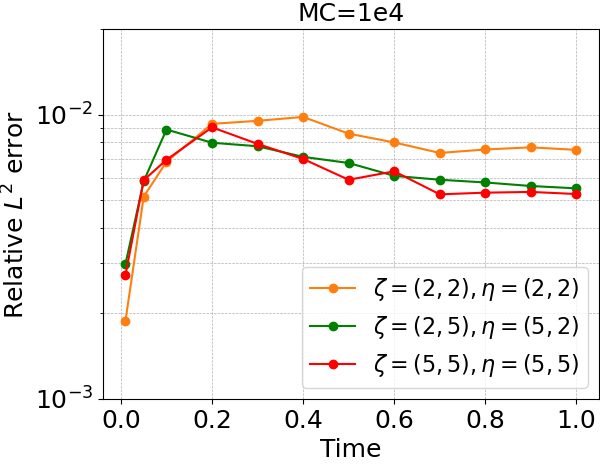}
        \subcaption{\(p(\bm{x},t)\)}
        \label{fig:ExampleA_err_various_beta}
	\end{minipage}
	\begin{minipage}[b]{0.35\linewidth}
		\includegraphics[height=3.4cm,width=4.2cm]{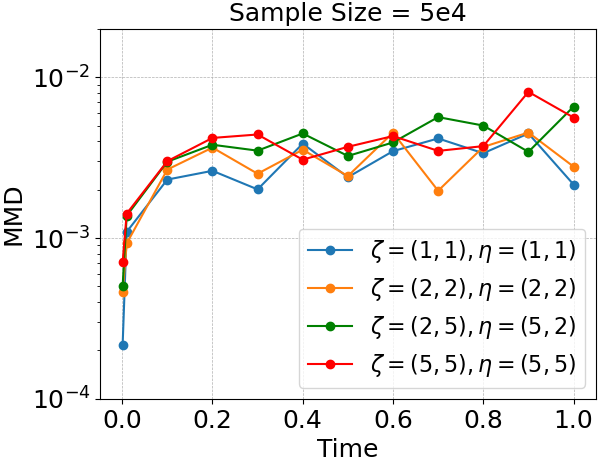}
        \subcaption{\(p(\bm{x},t)\)}
        \label{fig:ExampleA_MMD_uniform}
	\end{minipage}
    \caption{Errors for various initial distributions in the nonlinear-drift, constant-diffusion case. Left: \(\text{Rel}(\hat{p}_{\bm{\theta}}(\cdot,t))\). Right: MMD.}
\end{figure}

\subsection{SDE with nonlinear drift and state-dependent diffusion}
For our final example, we validate the robustness of the proposed method in approximating the PDF of the following SDE with a state-dependent diffusion,


\begin{equation*}
    \label{eqn:ex_Nonlinear_var_diffu}
\left\{\begin{split}
&\mathrm{d}\left(\begin{array}{c}X_1\\
X_2\end{array}\right)=\left(
\begin{array}{c}
X_2\\
-0.5X_1-0.3X_1^3+\sin X_2
\end{array}\right)\mathrm{d}t+\left(\begin{array}{cc}
    0.5+0.3 X_1 &  \\
     & 0.4+0.1\sin X_2
\end{array}\right)\mathrm{d}\bm{W}_t,\\
&\bm{X}_0=\bm{x}_0, \quad \bm{x}_0\in[-1,1]^2.
\end{split}\right.
\end{equation*}

The conditional normalizing flow adopts the same structure as in the two-dimensional Bene\v{s} equation. Training is conducted over 4 adaptive iterations, each consisting of 1000 epochs. \(4\times 10^5\) training points are used, with a batch size of \(5\times 10^4\) per epoch. 

\begin{figure}[htbp]
	\centering
	\begin{minipage}[b]{0.35\linewidth}
		\includegraphics[height=3.4cm,width=4.2cm]{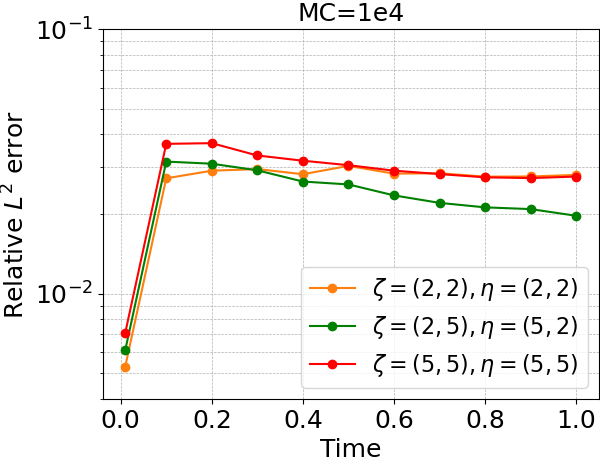}
        \subcaption{}
        \label{fig:Nonlinear-drift-var-diffu_err_various_beta}
	\end{minipage}
	\begin{minipage}[b]{0.35\linewidth}
		\includegraphics[height=3.4cm,width=4.2cm]{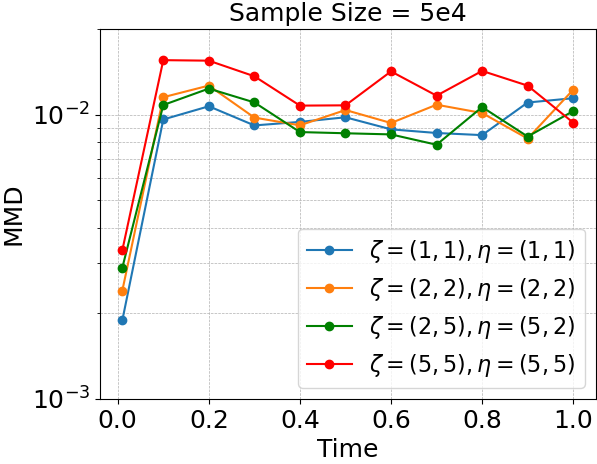}
        \subcaption{}
        \label{fig:Nonlinear-drift-var-diffu_MMD_uniform}
	\end{minipage}
    \caption{Errors for various initial distributions in the nonlinear-drift, state-dependent-diffusion case. Left: \(\text{Rel}(\hat{p}_{\bm{\theta}}(\cdot,t))\). Right: MMD.}
\end{figure}

\Cref{fig:Nonlinear-drift-var-diffu_err_various_beta} shows the \(\text{Rel}(\hat{p}_{\bm{\theta}}(\cdot,t))\) \eqref{eqn:relative_L2_PDF} for some initial conditions. The PDF \(\hat{p}_\theta(\cdot,t)\) is evaluated using the importance sampling estimator \eqref{eqn:important_sampling} where \(q=q_2\) with \(\alpha(t)=\exp(-6t)\) and \(10^4\) Monte Carlo samples, while the \(\text{Rel}(\hat{p}_\theta(\cdot,t))\) is computed on a validation dataset consisting of \(10^4\) uniformly distributed grid points over \([-5,5]^2\), with the solution obtained by the ADI scheme used as the reference. 
We report the MMD \eqref{eqn:MMD} computed using \(5\times 10^4\) samples, as a function of time in the \cref{fig:Nonlinear-drift-var-diffu_MMD_uniform}.

\section{Conclusion}\label{sec:conclusion}
In this work, we proposed a transition-density-based operator learning method for solving the FPE under various initial conditions. The conditional normalizing flow is employed to approximate the transition PDF given any initial state, effectively constructing a mapping from the target stochastic process to a simple base process. The solution of the linearized SDE serves as a suitable candidate for the base stochastic process. We quantified the discrepancy between the PDFs of the linearized and original SDEs, which is characterized by differences in their drift and diffusion terms. 
By employing the solution of the linearized SDE and enforcing the identity mapping at \(t=0\), the initial condition is naturally satisfied by our conditional normalizing flow, thereby bypassing the explicit handling of the Dirac delta initial condition. In addition, we designed a time-weighted loss function to alleviate numerical instabilities, ensuring that the proposed method respects causality in evolution PDEs while accounting for the increasing training difficulty as time progresses. To evaluate \(p(\cdot, t)\) for arbitrary initial conditions over bounded domains, we used an importance sampling strategy to enhance accuracy.
A series of numerical experiments validated the effectiveness and robustness. The four-dimensional Bene\v{s} SDE test demonstrated that the proposed method has the potential for solving moderately high-dimensional FPEs. This research contributes to the numerical solution of complex FPEs by combining the approximation capability of the base model (the linearized SDE) with the expressive power of the conditional normalizing flow. In future work, we will further refine this framework by improving the accuracy of the base stochastic process and exploring its potential applications.
Another promising direction is to generalize the proposed method to a broader class of evolutionary partial differential equations.

\appendix

\section{Proof of \cref{prop:dis_TPDFs}}
\label{appendix:proof_prop_dis_TPDFs} 
\begin{proof}
 Notice that the solution to the linearized SDE at time \(t\) is a Gaussian process, with PDF \( p_{\sigma_t}(\bm{x})=\mathcal{N}(\bm{x};\,\mathbb{E}_{\sigma_t}[\widehat{\bm{X}}_t|\widehat{\bm{X}}_0=\bm{x}_0],\bm{\Sigma}_t[\widehat{\bm{X}}_t|\widehat{\bm{X}}_0=\bm{x}_0])\) given the initial state \(\bm{x}_0\). For notation simplicity, we omit the dependence on the initial state \(\bm{x}_0\) and write \(p_{\sigma_t}(\bm{x})=\mathcal{N}(\bm{x};\,\mathbb{E}_{\sigma_t}[\widehat{\bm{X}}_t],\bm{\Sigma}_t[\widehat{\bm{X}}_t])\).
 Moreover, the linearized SDE and the original SDE share the same constant diffusion matrix, and \(\nabla \bm{f}_{\nu}\) is bounded by assumption. In particular, if \(C_{\bm{f}}=0\), then \(\bm{f}_{\sigma}(\bm{x})=\bm{f}_{\nu}(\bm{x})\) and consequently \(\sigma_t=\nu_t\), which completes the proof. It remains to consider the case \(C_{\bm{f}}>0\). It is easy to check that \((1+|\bm{x}|)^{-2}\left|G^{i j}\right|,(1+|\bm{x}|)^{-1}\left|\bm{f}_\nu\right|,(1+|\bm{x}|)^{-1}|\bm{f}_{\nu}-\bm{f}_{\sigma}| \in L^1\left(\mathbb{R}^d \times[0, t_f], \{\sigma_t\}_{0<t<t_f}\right)\) for some finite \(t_f\) where \(\|f\|_{L^1\left(\mathbb{R}^d \times[0, t_f], \{\sigma_t\}_{0<t<t_f}\right)}=\int_0^{t_f}\int_{\mathbb{R}^d}|f(\bm{x})|\sigma_t(\mathrm{d}\bm{x})\mathrm{d}t.\) Thus, according to Theorem 1.1 in \cite{bogachev2016distances}, we have
	\begin{equation}
    \label{eqn:KL_div}
		H\left(\sigma_t \mid \nu_t\right) \leq \frac{1}{2} \int_0^t \int_{\mathbb{R}^d}\left|\bm{G}^{-1 / 2} (\bm{f}_{\sigma}(\bm{x})-\bm{f}_{\nu}(\bm{x}))\right|^2 \mathrm{d} \sigma_s \mathrm{d}s.
	\end{equation}
    Combining this result with the Pinsker–Csisz\'ar–Kullback inequality \cite{gilardoni2010pinsker} yields
	\begin{equation}
    \label{eqn:TV}
		\left\|\nu_t-\sigma_t\right\|_{\mathrm{TV}}^2 \leq  \frac{1}{4}\int_0^t \int_{\mathbb{R}^d}\left|\bm{G}^{-1 / 2}\left(\bm{f}_{\sigma}(\bm{x})-\bm{f}_{\nu}(\bm{x})\right)\right|^2 \mathrm{d} \sigma_s \mathrm{d} s.
	\end{equation}
    Notice that \(\bm{G}_{\sigma}=\bm{G}_{\nu}=\bm{G}\) is a constant, non-degenerate matrix, 
	\begin{equation}
    \label{eqn:G_drfit_L2_dist}
	\left|\bm{G}^{-1/2}(\bm{f}_{\sigma}-\bm{f}_{\nu})\right|^2=(\bm{f}_{\sigma}-\bm{f}_{\nu})^T\bm{G}^{-1}(\bm{f}_{\sigma}-\bm{f}_{\nu})\leq \frac{1}{\lambda_{\text{min}}(\bm{G})}\left|\bm{f}_{\sigma}-\bm{f}_{\nu}\right|^2,
	\end{equation}
	where \(\lambda_{\text{min}}(\bm{G})\) is the smallest eigenvalue of \(\bm{G}\). 
	Meanwhile, for any \(s>0\), we have
    \begin{equation*}
    \label{eqn:drift_L2_distance}
		\begin{aligned}
		&\int_{\mathbb{R}^d}\left|\bm{f}_{\sigma}(\bm{x})-\bm{f}_{\nu}(\bm{x})\right|^2 \mathrm{d} \sigma_s 
		\leq 2\int_{\mathbb{R}^d}\left|\bm{f}_{\sigma}(\bm{x})-\bm{f}_{\sigma}(\bm{x}_0)\right|^2+\left|\bm{f}_{\nu}(\bm{x}_0)-\bm{f}_{\nu}(\bm{x})\right|^2 \mathrm{d} \sigma_s \\
		&=2\int_{\mathbb{R}^d}\left|\nabla \bm{f}_{\nu}(\bm{x}_0)(\bm{x}-\bm{x}_0)\right|^2 +\left|\int_0^1\nabla \bm{f}_{\nu}(\bm{x}_0+\tau(\bm{x}-\bm{x}_0))(\bm{x}-\bm{x}_0)\mathrm{d}\tau\right|^2\mathrm{d} \sigma_s \\
		&\leq 4C_{\bm{f}}^2 \int_{\mathbb{R}^d}\left|\bm{x}-\bm{x}_0\right|^2 \mathrm{d} \sigma_s =4C_{\bm{f}}^2\text{tr}\left(\bm{\Sigma}_s[\widehat{\bm{X}}_s]\right)+4C_{\bm{f}}^2\left|\mathbb{E}_{\sigma_s}[\widehat{\bm{X}}_s]-\bm{x}_0\right|^2,
		\end{aligned}
	\end{equation*}
    where the second inequality holds because \(\|\nabla\bm{f}_{\nu} (\bm{x})\|\leq C_{\bm{f}}\), \(\forall \bm{x}\in\mathbb{R}^d\) and the last equality holds because \(\sigma_s\) is a Gaussian measure with mean \(\mathbb{E}_{\sigma_s}[\widehat{\bm{X}}_s]\) and covariance \(\bm{\Sigma}_s[\widehat{\bm{X}}_s]\). 
    We give upper bounds for the two terms \(\text{tr}\left(\bm{\Sigma}_s[\widehat{\bm{X}}_s]\right)\) and \(|\mathbb{E}_{\sigma_s}[\widehat{\bm{X}}_s]-\bm{x}_0|^2\).
    	\begin{equation*}
    \label{eqn:linear_TPDF_cov_trace}
		\begin{aligned}
	&\text{tr}\left(\bm{\Sigma}_s[\widehat{\bm{X}}_s]\right)=\int_0^s\text{tr}(e^{\nabla \bm{f}(\bm{x}_0)(s-\tau)}\bm{g}\bm{g}^\top e^{(\nabla \bm{f}(\bm{x}_0))^\top(s-\tau)})\mathrm{d}\tau\\
	&\leq 2\text{tr}(\bm{G})\int_0^s\sigma_{\text{max}}^2\left(e^{\nabla \bm{f}(\bm{x}_0)\tau}\right)\mathrm{d}\tau\\
    &\leq 2\text{tr}(\bm{G})\int_0^s\left(e^{\sigma_{\text{max}}(\nabla \bm{f}(\bm{x}_0)\tau)}\right)^2\mathrm{d}\tau\leq 2\text{tr}(\bm{G})\int_0^se^{2C_{\bm{f}}\tau}\mathrm{d}\tau=\text{tr}(\bm{G})\frac{e^{2C_{\bm{f}}s}-1}{C_{\bm{f}}},
		\end{aligned}
	\end{equation*}
   where \(\sigma_{\text{max}}(\bm{A})\) is the spectral norm of the matrix \(\bm{A}\). 
	Meanwhile, equation \eqref{eqn:mean_var_of_linearized_SDE} gives
\begin{equation*}
\label{eqn:linear_TPDF_center_moment}
\begin{aligned}
|\mathbb{E}_{\sigma_s}[\widehat{\bm{X}}_s]-\bm{x}_0|^2= \left|\int_0^se^{\nabla\bm{f}(\bm{x}_0)\tau}\bm{f}(\bm{x}_0)d\tau\right|^2
		\leq |\bm{f}(\bm{x}_0)|^2\left(\frac{e^{C_{\bm{f}}s}-1}{C_{\bm{f}}}\right)^2.
\end{aligned}
\end{equation*}
Therefore, given \(\bm{x}_0\), \(\text{tr}\left(\bm{\Sigma}_s[\widehat{\bm{X}}_s]\right)= \mathcal{O}(s)\), \(\left|\mathbb{E}_{\sigma_s}[\widehat{\bm{X}}_s]-\bm{x}_0\right|^2=\mathcal{O}(s^2)\). 
Moreover,
\begin{equation*}
		\begin{aligned}
		&H\left(\sigma_t \mid \nu_t\right) \leq \frac{1}{2\lambda_{\min}(\bm{G})} \int_0^t \int_{\mathbb{R}^d}\left|\bm{f}_{\sigma}(\bm{x})-\bm{f}_{\nu}(\bm{x})\right|^2 \mathrm{d} \sigma_s \mathrm{d} s,\\
        &\leq \frac{2C_{\bm{f}}^2}{\lambda_{\min}(\bm{G})} \int_0^t\left(\text{tr}(\bm{G})\frac{e^{2C_fs}-1}{C_{\bm{f}}}+|\bm{f}(\bm{x}_0)|^2\left(\frac{e^{C_{\bm{f}}s}-1}{C_{\bm{f}}}\right)^2\right)\mathrm{d}s\\
        & =\frac{2C_{\bm{f}}^2}{\lambda_{\min}(\bm{G})}\left( \text{tr}(\bm{G})\left(t^2+\mathcal{O}(t^3)\right)+|\bm{f}(\bm{x}_0)|^2\left(\frac{1}{3}t^3+\mathcal{O}(t^4)\right)\right)=\mathcal{O}(t^2) \text{ as } t\to 0.
		\end{aligned}
	\end{equation*}
Hence, \(\left\|\left(\nu_t-\sigma_t\right)\right\|_{\mathrm{TV}}^2\leq \frac{1}{2}H\left(\sigma_t \mid \nu_t\right)=\mathcal{O}(t^2) \text{ as } t\to 0.\)
Furthermore, if the Hessian of the drift \(\bm{f}_{\nu}\) is uniformly bounded by \(H_{\bm{f}}\), then
        \begin{align*}
            &\int_{\mathbb{R}^d}\left|\bm{f}_{\sigma}(\bm{x})-\bm{f}_{\nu}(\bm{x})\right|^2 \mathrm{d} \sigma_s \\
            =&\int_{\mathbb{R}^d}\left|\int_0^1(1-h)(\bm{x}-\bm{x}_0)^\top \text{Hess}(\bm{x}_0+h(\bm{x}-\bm{x}_0))(\bm{x}-\bm{x}_0)\mathrm{d}h\right|^2 \mathrm{d} \sigma_s \leq \frac{H_{\bm{f}}^2}{4}\int_{\mathbb{R}^d}|\bm{x}-\bm{x}_0|^4\mathrm{d}\sigma_s.
        \end{align*}
    \begin{equation*}
    \begin{aligned}
        &\int_{\mathbb{R}^d}|\bm{x}-\bm{x}_0|^4\mathrm{d}\sigma_s \leq 8\int_{\mathbb{R}^d}|\bm{x}-\mathbb{E}_{\sigma_s}[\widehat{\bm{X}}_s]|^4\mathrm{d}\sigma_s+8\int_{\mathbb{R}^d}|\mathbb{E}_{\sigma_s}[\widehat{\bm{X}}_s]-\bm{x}_0|^4\mathrm{d}\sigma_s\\
        & \leq 24\text{tr}^2\left(\bm{\Sigma}_s[\widehat{\bm{X}}_s]\right)+8\int_{\mathbb{R}^d}\left|\int_0^s e^{\nabla \bm{f}(\bm{x}_0)\tau}\bm{f}(\bm{x}_0)d\tau\right|^4\mathrm{d}\sigma_s\\
        & \leq 24\text{tr}^2(\bm{G})\frac{(e^{2C_{\bm{f}}s}-1)^2}{C_{\bm{f}}^2}+8|\bm{f}(\bm{x}_0)|^4\left(\frac{e^{C_{\bm{f}}s}-1}{C_{\bm{f}}}\right)^4
         =96 \text{tr}^2(\bm{G})s^2 + \mathcal{O}(s^3), \quad s\to 0,
    \end{aligned}
    \end{equation*}
    having used that \(\int_{\mathbb{R}^d}|\bm{x}-\mathbb{E}_{\sigma_s}[\widehat{\bm{X}}_s]|^4\mathrm{d}\sigma_s=\text{tr}^2\left(\bm{\Sigma}_s[\widehat{\bm{X}}_s]\right)+2\text{tr}\left(\bm{\Sigma}_s(\widehat{\bm{X}}_s)^2\right)\) and \(\text{tr}\left(\bm{\Sigma}_s^2[\widehat{X}_s]\right)\leq \text{tr}^2\left(\bm{\Sigma}_s[\widehat{\bm{X}}_s]\right)\).
    Hence, \(\int_0^t \int_{\mathbb{R}^d}\left|\bm{f}_{\sigma}(\bm{x})-\bm{f}_{\nu}(\bm{x})\right|^2 \mathrm{d} \sigma_s \mathrm{d} s\leq \frac{H_{\bm{f}}^2}{4}\int_0^t\int_{\mathbb{R}^d}|\bm{x}-\bm{x}_0|^4\mathrm{d}\sigma_s\mathrm{d}s=8H_{\bm{f}}^2\text{tr}^2(\bm{G})t^3+\mathcal{O}(t^4)\), as \(t\to 0\).
Therefore, \(
		H\left(\sigma_t \mid \nu_t\right) =\mathcal{O}(t^3), \; \left\|\nu_t-\sigma_t\right\|_{T V}^2=\mathcal{O}(t^3)\) as  \( t\to 0\).
\end{proof}

\section{Proof of \cref{prop:bound_expection_by_TV}}\label{proof_prop_bound_expection_by_TV}
\begin{proof}
Let \(\rho=\nu+\sigma\), then \(\nu\ll\rho\), \(\sigma\ll\rho\). Define \(p_{\nu}\coloneqq \frac{d\nu}{d\rho}\), \(p_{\sigma}\coloneqq \frac{d\sigma}{d\rho}\), thus \(\nu=p_\nu\rho\), \(\sigma=p_\sigma\rho\), and \(p_\nu+p_\sigma=1\). For \(\bm{f}=(f^1, \dots, f^m)\in L^2(\mathbb{R}^d, \nu;\mathbb{R}^m)\cap L^2(\mathbb{R}^d, \sigma;\mathbb{R}^m)\), 
\begin{equation*}
	\begin{aligned}
	&\left|\int_{\mathbb{R}^d}\bm{f}(\bm{x})\mathrm{d}\nu(\bm{x})-\int_{\mathbb{R}^d}\bm{f}(\bm{x})\mathrm{d}\sigma(\bm{x})\right|=\left|\int_{\mathbb{R}^d}\bm{f}(\bm{x})(p_\nu(\bm{x})-p_\sigma(\bm{x}))\mathrm{d}\rho(\bm{x})\right|\\
    \leq & \int_{\mathbb{R}^d}\left|\bm{f}(\bm{x})(p_\nu(\bm{x})+p_\sigma(\bm{x}))^{\frac{1}{2}}\frac{p_\nu(\bm{x})-p_\sigma(\bm{x})}{(p_\nu(\bm{x})+p_\sigma(\bm{x}))^{\frac{1}{2}}}\right|\mathrm{d}\rho(\bm{x})\\
    \leq & \left(\int_{\mathbb{R}^d}|\bm{f}(\bm{x})|^2(p_\nu(\bm{x})+p_\sigma(\bm{x}))\mathrm{d}\rho(\bm{x})\right)^{\frac{1}{2}}\left(\int_{\mathbb{R}^d}\frac{(p_\nu(\bm{x})-p_\sigma(\bm{x}))^{2}}{p_\nu(\bm{x})+p_\sigma(\bm{x})}\mathrm{d}\rho(\bm{x})\right)^{\frac{1}{2}}\\
      \leq & \left(\|\bm{f}\|_{L^2(\mathbb{R}^d,\nu;\mathbb{R}^m)}+\|\bm{f}\|_{L^2(\mathbb{R}^d,\sigma;\mathbb{R}^m)}\right)\sqrt{2\|\nu-\sigma\|_{\mathrm{TV}}}. \hspace{4.7cm} \proofbox
	\end{aligned}
\end{equation*} 
\qed

\end{proof}

\section{Proof of \cref{prop:L_2_convergence_of_T}}
\label{proof_prop_L2_convergence_of_T}

\begin{proof}
Let \(\Pi_k:\mathbb{R}^d\to \mathbb{R}^k\) denote the canonical projection onto the first \(k\) coordinates, that is, \(\Pi_k(x_1, x_2, \cdots, x_d)=(x_1, x_2, \cdots, x_k)\). 
The KR map \(T_i\) admits \(T_i(\bm{x})=(T_i^1(x_1), T_i^2(x_1, x_2),\cdots, T_i^d(x_1,\dots, x_d))\). For \(1\leq k\leq d\), denote its \(k\)-dimensional truncation by \(T_i^{(k)}(\bm{x}^{(1:k)})=\Pi_k\circ T_i(\bm{x})\) where \(\bm{x}^{(1:k)}=(x_1, \dots, x_k)\). We denote by \(\mu^{(k)}\coloneqq \Pi_{k\#}\mu\) and \(\nu_i^{(k)}\coloneqq \Pi_{k\#}\nu_i\) the \(k\)-dimensional marginal distribution of \(\mu\) and \(\nu_i\), respectively. Their corresponding PDFs are denoted by \(p_\mu^{(k)}\) and \(p_{\nu_i}^{(k)}\). By the definition of total variation distance, we have \(\|\nu_i^{(k)}-\mu^{(k)}\|_{\mathrm{TV}}\leq\|\nu_i-\mu\|_{\mathrm{TV}}\to 0.\)

We now prove \(T_i(\bm{x})\to \bm{x}\) in \(\mu\)-probability as \(i\to\infty\) by induction. 

We first show that \(T_i^1(x_1)\to x_1\) in \(\mu\)-probability as \(i\to\infty\). Let \(F_i^1\) and \(F^1\) be the cumulative distribution functions of \(\nu_i^{(1)}\) and \(\mu^{(1)}\), respectively. 
Let \(\delta_i^1=\sup_{x\in\mathbb{R}}|F_i^1(x)-F^1(x)|\). Since \(\|\nu_i^{(1)}-\mu^{(1)}\|_{\mathrm{TV}}\to 0\), we have \(\lim_{i\to\infty}\delta_i^1= 0\).
Notice that \(T_i^1(x_1)=(F^{1}_i)^{(-1)}\circ F^1(x_1)\), and \((F^{1}_i)^{(-1)}(s)=\inf\{y\in\mathbb{R}: F_i^1(y)\geq s\}\).
For any \(\epsilon>0\), if \(T_i^1(x_1)\geq x_1+\epsilon\), then \(F_i^1(x_1+\epsilon)\leq F^1(x_1)\). Hence, \(F^1(x_1+\epsilon)-F^1(x_1)\leq F^1(x_1+\epsilon)-F_i^1(x_1+\epsilon)\leq \delta_i^1\). On the other hand, if \(T_i^1(x_1)\leq x_1-\epsilon\), then \(F_i^1(x_1-\epsilon)\geq F^1(x_1)\). Hence, \(F^1(x_1)- F^1(x_1-\epsilon)\leq F_i^1(x_1-\epsilon)-F^1(x_1-\epsilon)\leq \delta_i^1\). Therefore, we have
\(\{|T_i^1(x_1)-x_1|\geq \epsilon\}\subseteq \{F^1(x_1)- F^1(x_1-\epsilon)\leq\delta_i^1\}\cup \{ F^1(x_1+\epsilon)-F^1(x_1)\leq\delta_i^1\}\). Let \(\Omega^{-}(\delta)=\{x_1\mid F(x_1)-F(x_1-\epsilon)\leq \delta\}\), and \(\Omega^{+}(\delta)=\{x_1\mid F(x_1+\epsilon)-F(x_1)\leq \delta\}\). Thus,
 \( \mu^{(1)}\left(|T_i^1(x_1)-x_1|\geq\epsilon\right)
 \leq \mu^{(1)}\left(\Omega^-(\delta_i^1)\right)+\mu^{(1)}\left( \Omega^{+}(\delta_i^1)\right),\)
where \(
    \mu^{(1)}\left(\Omega^-(\delta_i^1)\right)=\int 1_{\Omega^-(\delta_i^1)}(x_1)\mathrm{d}\mu^{(1)}(x_1)\), and  \(\mu^{(1)}\left(\left(\Omega^+(\delta_i^1)\right)\right)=\int 1_{\Omega^+(\delta_i^1)}(x_1)\mathrm{d}\mu^{(1)}(x_1)\). According to the dominated convergence theorem, we have
    \begin{equation*}
      \lim_{i\to \infty}\mu^{(1)}\left(\Omega^-(\delta_i^1)\right)=\int 1_{\Omega^-(0)}(x_1)\mathrm{d}\mu^{(1)}(x_1)=\mu^{(1)}(\Omega^-(0))=\sum_{j\in\mathbb{Z}}\mu^{(1)}(\Omega^{-}(0)\cap((j-1)\epsilon, j\epsilon]).
    \end{equation*}
    We note that \(x_1\in\Omega^{-}(0)\) implies \(F(x_1)-F(x_1-\epsilon)=0\).
    For any \(j\in\mathbb{Z}\), let \(\overline{x}_j=\sup\{x_1\in\Omega^-(0)\cap((j-1)\epsilon,j\epsilon])\}\), then \(\Omega^-(0)\cap((j-1)\epsilon,j\epsilon]\subset ((j-1)\epsilon,\overline{x}_j]\) and 
    \begin{equation*}
        \mu^{(1)}\left(\Omega^-(0)\right)\leq \sum_{j\in\mathbb{Z}}\mu^{(1)}\left(((j-1)\epsilon, \overline{x}_j]\right)\leq \sum_{j\in\mathbb{Z}}\mu^{(1)}((\overline{x}_j-\epsilon, \overline{x}_j])=0.
    \end{equation*}
    Thus, \(\lim_{i\to \infty}\mu^{(1)}\left(\Omega^-(\delta_i^1)\right)=0\). Similarly, we have \(\lim_{i\to \infty}\mu^{(1)}\left(\Omega^+(\delta_i^1)\right)=0\).
 Hence, \(\lim_{i\to \infty}\mu\left(\{|T_i^1(x_1)-x_1|\geq\epsilon\}\right)=0\),  and \(T_i^1(x_1)\xrightarrow{i\to\infty} x_1\) in \(\mu\)-probability.  

Suppose that for any \(1\leq k\leq d-1\), \(T_i^k(\bm{x}^{(1:k)})\xrightarrow{i\to\infty} x_k\) in \(\mu\)-probability. We next prove that \(T_i^{k+1}(\bm{x}^{(1:k+1)})\xrightarrow{i\to\infty} x_{k+1}\) in \(\mu\)-probability. For notation simplicity, we write \(\zeta_{i,T_i^{(k)}}(\mathrm{d}x_{k+1})\coloneqq \nu_i(dx_{k+1}\mid T_i^{(k)}(\bm{x}^{(1:k)}))\), \(\zeta_{i}(\mathrm{d}x_{k+1})\coloneqq \nu_i(dx_{k+1}\mid \bm{x}^{(1:k)})\),   \(\zeta_{T_i^{(k)}}(\mathrm{d}x_{k+1})\coloneqq \mu(dx_{k+1}\mid T_i^{(k)}(\bm{x}^{(1:k)}))\), \(\zeta(\mathrm{d}x_{k+1})\coloneqq \mu(\mathrm{d}x_{k+1}\mid \bm{x}^{(1:k)})\). 
We show that \begin{equation}
    \int \|\zeta_{i,T_i^{(k)}}-\zeta\|_{\mathrm{TV}}\mathrm{d}\mu^{(k)}(\bm{x}^{(1:k)})\to 0, \text{ as } i\to\infty.
    \label{eqn:k+1_th_TV}
\end{equation}
 Notice that
 \( \left\|\zeta_{i,T_i^{(k)}}-\zeta\right\|_{\mathrm{TV}}
\leq \left\|\zeta_{i,T_i^{(k)}}-\zeta_{T^{(k)}_i}\right\|_{\mathrm{TV}} + \left\|\zeta_{T^{(k)}_i}-\zeta\right\|_{\mathrm{TV}},\)
and
\begin{equation*}
\begin{aligned}
&\int \|\zeta_{i,T_i^{(k)}}-\zeta_{T_i^{(k)}}\|_{\mathrm{TV}}\mathrm{d}\mu^{(k)}(\bm{x}^{(1:k)})=\int \|\zeta_{i}-\zeta\|_{\mathrm{TV}}\mathrm{d}\nu_i^{(k)}(\bm{x}^{(1:k)})\\
=&\frac{1}{2}\int  |p_{\nu_i}(x_{k+1}|\bm{x}^{(1:k)})-p_{\mu}(x_{k+1}|\bm{x}^{(1:k)})|p^{(k)}_{\nu_i}(\bm{x}^{(1:k)})\mathrm{d} x_{k+1}\mathrm{d} \bm{x}^{(1:k)}\\
\leq & \frac{1}{2}\int \left|p_{\nu_i}^{(k+1)}-p_{\mu}^{(k+1)}\right|\mathrm{d} \bm{x}^{(1:k+1)}+\frac{1}{2}\int \left|
p^{(k)}_{\mu}-p_{\nu_i}^{(k)}\right|\mathrm{d} \bm{x}^{(1:k)}\\
= &\|\nu_i^{(k+1)}-\mu^{(k+1)}\|_{\mathrm{TV}}+\|\nu_i^{(k)}-\mu^{(k)}\|_{\mathrm{TV}}\leq 2\|\nu_i-\mu\|_{\mathrm{TV}}\to 0, \quad \text{as } i\to\infty.
\end{aligned}
\end{equation*}
Therefore, to prove \eqref{eqn:k+1_th_TV}, we only need to show \(\int \|\zeta_{T_{i}^{(k)}}-\zeta\|_{\mathrm{TV}}\mathrm{d}\mu^{(k)}\) goes to 0 as \(i\to\infty\).
Since \(\mu\) is absolutely continuous and \(\int p_{\mu}(x_{k+1}|\bm{x}^{(1:k)})\mathrm{d} x_{k+1}=1\) for \(\mu^{(k)}\)-a.e. \(\bm{x}^{(1:k)}\), \(p_{\mu}(x_{k+1}|\bm{x}^{(1:k)})\) can be regarded as a \(L^1(\mathbb{R})\) function for \(\mu^{(k)}\)-a.e. \(\bm{x}^{(1:k)}\). 
According to Lemma 1.2.31 in \cite{hytonen3analysis}, for any \(\epsilon>0\), there exists a bounded continuous \(L^1(\mathbb{R})\) function \(\phi(x_{k+1}, \bm{x}^{(1:k)})\) with  \(\|\phi\|_\infty\coloneqq \sup_{\bm{z}\in\mathbb{R}^k}\|\phi(\cdot,\bm{z})\|_{L^1(\mathbb{R})}\) such that 
\[H_\phi\coloneqq \int \|p_{\mu}(x_{k+1}|\bm{x}^{(1:k)})-\phi(x_{k+1}, \bm{x}^{(1:k)})\|_{L^1(\mathbb{R})}\mathrm{d}\mu^{(k)}(\bm{x}^{(1:k)})<\epsilon.\] 
Thus, \begin{equation}
\begin{aligned}
&\int \|\zeta_{T_{i}^{(k)}}-\zeta\|_{\mathrm{TV}}\mathrm{d}\mu^{(k)}(\bm{x}^{(1:k)})\\
=&\frac{1}{2}\int \|p_{\mu}(x_{k+1}|T_{i}^{(k)}(\bm{x}^{(1:k)}))-p_\mu(x_{k+1}|\bm{x}^{(1:k)})\|_{L^1(\mathbb{R})}\mathrm{d} \mu^{(k)}(\bm{x}^{(1:k)})\\
\leq &\frac{1}{2}\int \|p_{\mu}(x_{k+1}|T_{i}^{(k)}(\bm{x}^{(1:k)}))-\phi(x_{k+1}, T_{i}^{(k)}(\bm{x}^{(1:k)}))\|_{L^1(\mathbb{R})}\mathrm{d} \mu^{(k)}(\bm{x}^{(1:k)})\\
&+\frac{1}{2}\int \|\phi(x_{k+1}, T_{i}^{(k)}(\bm{x}^{(1:k)}))-\phi(x_{k+1}, \bm{x}^{(1:k)})\|_{L^1(\mathbb{R})}\mathrm{d} \mu^{(k)}(\bm{x}^{(1:k)})
+\frac{1}{2}H_\phi.
\end{aligned}
\label{eqn:conditional_TV}
\end{equation}
Since \({T^{(k)}_i}_\#\mu^{(k)}=\nu_i^{(k)}\) and \(\|\nu_i^{(k)}-\mu^{(k)}\|_{\mathrm{TV}}\to 0\), there exists \(i_0\in\mathbb{N}\) such that \(\|\nu_i^{(k)}-\mu^{(k)}\|_{\mathrm{TV}}< \frac{\epsilon}{1+\|\phi\|_\infty} \) for any \(i\geq i_0\). Then for \(i\geq i_0\), we have 

\begin{equation}
    \begin{aligned}
       & \int \|p_{\mu}(x_{k+1}|T_{i}^{(k)}(\bm{x}^{(1:k)}))-\phi(x_{k+1}, T_{i}^{(k)}(\bm{x}^{(1:k)}))\|_{L^1(\mathbb{R})}\mathrm{d} \mu^{(k)}(\bm{x}^{(1:k)})\\
       =&\int \|p_{\mu}(x_{k+1}|\bm{x}^{(1:k)}))-\phi(x_{k+1}, \bm{x}^{(1:k)})\|_{L^1(\mathbb{R})}\mathrm{d} \nu_i^{(k)}(\bm{x}^{(1:k)})\\
       =&H_\phi+\int \|p_{\mu}(x_{k+1}|\bm{x}^{(1:k)}))-\phi(x_{k+1}, \bm{x}^{(1:k)})\|_{L^1(\mathbb{R})}\mathrm{d} (\nu_i^{(k)}-\mu^{(k)})(\bm{x}^{(1:k)})\\
       < &\epsilon+2(1+\|\phi\|_\infty)\|\nu^{(k)}_i-\mu^{(k)}\|_{\mathrm{TV}}< 3\epsilon.
    \end{aligned}
    \label{eqn:p_phi_T_i}
\end{equation}

Meanwhile, \(T_{i}^{(k)}(\bm{x}^{(1:k)})\to \bm{x}^{(1:k)}\) in \(\mu\)-probability as \(i\to\infty\) and \(\phi(x_{k+1},\bm{x}^{(1:k)})\) is a bounded continuous function. Hence, \(\phi(x_{k+1}, T_{i}^{(k)}(\bm{x}^{(1:k)}))\to \phi(x_{k+1}, \bm{x}^{(1:k)})\) in \(\mu^{(k)}\)-probability with respect to the \(L^1\) norm as \(i\to\infty\).
Since \(\phi\) is bounded, hence integrable, according to the Vitali convergence theorem \cite{folland1999real}, we have \(\int \|\phi(\cdot, T_{i}^{(k)}(\bm{x}^{1:k}))-\phi(\cdot,\bm{x}^{(1:k)})\|_{L^1(\mathbb{R})}\mathrm{d} \mu^{(k)}(\bm{x}^{1:k})\to 0\) as \(i\to\infty\). Thus, there exists \(i_1\in\mathbb{N}\) such that 
\begin{equation}
    \int \|\phi(\cdot, T_{i}^{(k)}(\bm{x}^{1:k}))-\phi(\cdot,\bm{x}^{(1:k)})\|_{L^1(\mathbb{R})}\mathrm{d} \mu^{(k)}(\bm{x}^{1:k})<\epsilon, \text{ for any } i\geq i_1.
\label{eqn:phi_T_i}
\end{equation}
Substituting equations \eqref{eqn:p_phi_T_i}-\eqref{eqn:phi_T_i} into equation \eqref{eqn:conditional_TV} gives  
\(\int \|\zeta_{i,T_{i}^{(k)}}-\zeta\|_{\mathrm{TV}}\mathrm{d}\mu^{(k)}< \frac{5}{2}\epsilon\) as \(i>\max\{i_0, i_1\}\). Hence, \(\lim_{i\to\infty}\int \|\zeta_{i,T_{i}^{(k)}}-\zeta\|_{\mathrm{TV}}\mathrm{d}\mu^{(k)}=0\) and equation \eqref{eqn:k+1_th_TV} holds.
For \(\mu^{(k)}\)-a.e. \(\bm{x}^{(1:k)}\), \(T_i^{k+1}(\bm{x}^{(1:k)},x_{k+1})\) is a one-dimensional KR map from \(\zeta\) to \(\zeta_{i,T_i^{(k)}}\). Let \(F_{i}^{k+1}\) and \(F^{k+1}\) be the cumulative distribution functions of \(\zeta\) and \(\zeta_{i,T_i^{(k)}}\), respectively. Denote \(\delta_i^{k+1}(\bm{x}^{(1:k)})=\sup_{x_{k+1}}|F_i^{k+1}(x_{k+1})-F^{k+1}(x_{k+1})|\). Then 
\[\int\delta_i^{k+1}(\bm{x}^{(1:k)})\mathrm{d}\mu^{(k)}(\bm{x}^{(1:k)})\leq 2\int \left\|\zeta_{i,T_{i}^{(k)}}-\zeta\right\|_{\mathrm{TV}}\mathrm{d}\mu^{(k)}(\bm{x}^{(1:k)})\xrightarrow{i\to\infty} 0.\]
We want to show that \(\mu^{(k+1)}\left(|T_i^{k+1}(\bm{x}^{(1:k+1)})-x_{k+1}|\geq\epsilon\right)\xrightarrow{i\to\infty} 0\) 
or equivalently, 
\[\forall\eta>0, \, \exists i_\eta\in\mathbb{N}: \, \mu^{(k+1)}\left(|T_i^{k+1}(\bm{x}^{(1:k+1)})-x_{k+1}|\geq\epsilon\right)\leq\eta, \, \forall i\geq i_\eta.\]
Let \(R_\delta(\bm{x}^{(1:k)})=\zeta\left(F^{k+1}(x_{k+1})- F^{k+1}(x_{k+1}-\epsilon)\leq\delta\right)+\zeta\left(F^{k+1}(x_{k+1}+\epsilon)-F^{k+1}(x_{k+1})\leq \delta\right)\).
Following the same argument as for \(T_i^1\) and since \(0\leq R_\delta\leq 2\), we have  for any \(\epsilon>0\), \(\int R_\delta\mathrm{d}\mu^{(k)}\to 0\) as \(\delta\to 0\). Let here \(\delta_\eta\) be s.t. \(\int R_\delta\mathrm{d}\mu^{(k)}\leq \eta/2\). Since \(\lim_{i\to\infty}\int\delta_i^{k+1}(\bm{x}^{(1:k)})\mathrm{d}\mu^{(k)}=0\), there exists \(i_\eta\) s.t. \(\int\delta_i^{k+1}(\bm{x}^{(1:k)})\mathrm{d}\mu^{(k)}\leq \delta_\eta\cdot \eta/2\), \(\forall i\geq i_\eta\). Using that 
\(\{|T_i^{k+1}(\bm{x}^{(1:k+1)})-x_{k+1}|\geq \epsilon\}\cap\{\delta_i^{k+1}\leq \delta_\eta\}\subset \{F^{k+1}( x_{k+1})- F^{k+1}(x_{k+1}-\epsilon)\leq\delta_\eta\}\cup \{ F^{k+1}( x_{k+1}+\epsilon)-F^{k+1}(x_{k+1})\leq\delta_\eta\}\), we have   
  \begin{align*}
&\mu^{(k+1)}\left(|T_i^{k+1}(\bm{x}^{(1:k+1)})-x_{k+1}|\geq\epsilon\right)\\
=&\mu^{(k+1)}\left(|T_i^{k+1}(\bm{x}^{(1:k+1)})-x_{k+1}|\geq\epsilon, \delta_i^{k+1}(\bm{x}^{(1:k)})\leq \delta_\eta\right)\\
&+\mu^{(k+1)}\left(|T_i^{k+1}(\bm{x}^{(1:k+1)})-x_{k+1}|\geq\epsilon, \delta_i^{k+1}(\bm{x}^{(1:k)})> \delta_\eta\right)\\
\leq& \int\zeta(|T_i^{k+1}(\bm{x}^{(1:k+1)})-x_{k+1}|\geq\epsilon)1_{\delta_i^{k+1}(\bm{x}^{(1:k)})\leq \delta_\eta}\mathrm{d}\mu^{(k)}+
\mu^{(k+1)}(\delta_i^{k+1}>\delta_\eta)\\
\leq &\int R_{\delta_\eta}(\bm{x}^{(1:k)})\mathrm{d}\mu^{(k)}(\bm{x}^{(1:k)})+\mu^{k+1}(\delta_i^{k+1}>\delta_\eta)
\leq \frac{\eta}{2}+\frac{1}{\delta_\eta}\int\delta_i^{k+1}(\bm{x}^{(1:k)})\mathrm{d}\mu^{(k)}\leq \eta, \, \forall i\geq i_\eta,
  \end{align*}
which shows that \(T^{k+1}_i(\bm{x}^{(1:k+1)})\to x_{k+1}\) in \(\mu^{(k+1)}\)-probability.

We now prove that \(\lim_{i\to\infty}\int_{\mathbb{R}^d}|T_i(\bm{x})-\bm{x}|^2\mathrm{d}\mu(\bm{x})= 0\). Since \(T_{i_\#}\mu=\nu_i\), 
\begin{equation*}
\begin{aligned}
\int_{\mathbb{R}^d}|T_i(\bm{x})|^2\mathrm{d}\mu(\bm{x})&=\int_{\mathbb{R}^d}|\bm{x}|^2\mathrm{d}\nu_i(\bm{x})\to \int_{\mathbb{R}^d}|\bm{x}|^2\mathrm{d}\mu(\bm{x}), \quad \text{as } i\to\infty.
\end{aligned}
\end{equation*}
That is to say, \(\lim_{i\to\infty}\|T_i\|_{L^2(\mathbb{R}^d,\mu)}= \|id\|_{L^2(\mathbb{R}^d,\mu)}\). Meanwhile, \(2|T_i(\bm{x})\cdot \bm{x}|\leq |T_i(\bm{x})|^2+|\bm{x}|^2,\)
and \(\{T_i\}_{i\geq 1}\) are uniformly integrable in \(L^1(\mu)\). Thus, according to the Vitali convergence theorem \cite{folland1999real}, we have 
\(
\lim_{i\to\infty}\int_{\mathbb{R}^d}T_i(\bm{x})\cdot \bm{x}\mathrm{d}\mu(\bm{x})= \int_{\mathbb{R}^d}|\bm{x}|^2\mathrm{d}\mu(\bm{x})
\). Hence,
\begin{equation*}
  \begin{aligned}
\|T_i-id\|^2_{L^2(\mathbb{R}^d,\mu)}=&\|T_i\|^2_{L^2(\mathbb{R}^d,\mu)}+\|id\|^2_{L^2(\mathbb{R}^d,\mu)}-2\int_{\mathbb{R}^d}T_i(\bm{x})\cdot \bm{x}\mathrm{d}\mu(\bm{x})\xrightarrow{i\to\infty} 0. 
  \end{aligned}
\end{equation*}
 Therefore, \(\|T_i-id\|^2_{L^2(\mu)}\to 0\) as \(i\to\infty\). 
\end{proof}

\section{Proof of \cref{prop:residual_approx}}
\label{appendix:proof_prop_residual_approx}

\begin{proof}
    For any \(\tilde{p}(\bm{x},t|\bm{x}_0)=\mathcal{N}(\bm{x};\bm{m}(t), \bm{\Sigma}(t)), \quad \bm{\Sigma}(t)\sim t\bm{G}, \) we have,
    
   \noindent\(\frac{\partial \tilde{p}}{\partial t}=\tilde{p}\left(-\frac{1}{2}\text{tr}\left(\bm{\Sigma}^{-1}\frac{\mathrm{d}\bm{\Sigma}}{\mathrm{d}t}\right)+\frac{\mathrm{d}\bm{m}^\top}{\mathrm{d}t}\bm{\Sigma}^{-1}(\bm{x}-\bm{m})+\frac{1}{2}(\bm{x}-\bm{m})^\top\bm{\Sigma}^{-1}\frac{\mathrm{d}\bm{\Sigma}}{\mathrm{d}t}\bm{\Sigma}^{-1}(\bm{x}-\bm{m})\right),\)

    \noindent\(\nabla \tilde{p}=\tilde{p}(-\bm{\Sigma}^{-1}(\bm{x}-\bm{m})),\quad  \nabla\cdot\nabla\cdot(\bm{g}\bm{g}^\top \tilde{p})=\tilde{p}(\bm{g}^\top\bm{\Sigma}^{-1}(\bm{x}-\bm{m}))^2-\tilde{p}\bm{g}^\top\bm{\Sigma}^{-1}\bm{g}.\)
    
Hence, \(
    r(\bm{x},\bm{x}_0,t;\tilde{p})=\tilde{p}\tilde{r}(\bm{x},\bm{x}_0,t;\tilde{p}),
\)
\begin{equation*}
\begin{aligned}
    \tilde{r}(\bm{x},\bm{x}_0,t;\tilde{p})=&-\frac{1}{2}\text{tr}\left(\bm{\Sigma}^{-1}\frac{\mathrm{d}\bm{\Sigma}}{\mathrm{d}t}\right)+\frac{\mathrm{d}\bm{m}^\top}{\mathrm{d}t}\bm{\Sigma}^{-1}(\bm{x}-\bm{m})+\frac{1}{2}(\bm{x}-\bm{m})^\top\bm{\Sigma}^{-1}\frac{\mathrm{d}\bm{\Sigma}}{\mathrm{d}t}\bm{\Sigma}^{-1}(\bm{x}-\bm{m})\\
    & + \nabla\cdot \bm{f} - \bm{f}^\top\bm{\Sigma}^{-1}(\bm{x}-\bm{m})-\frac{1}{2}(\bm{g}^\top\bm{\Sigma}^{-1}(\bm{x}-\bm{m}))^2+\frac{1}{2}\bm{g}^\top\bm{\Sigma}^{-1}\bm{g}.
\end{aligned}
\end{equation*}
Thus, \(|\tilde{r}(\bm{x},\bm{x}_0,t;\tilde{p})| \leq \tilde{C}_0+\tilde{C}_1t^{-1}+\tilde{C}_2|\bm{x}-\bm{m}|t^{-1}+\tilde{C}_3|\bm{x}-\bm{m}|^2t^{-1}+\tilde{C}_4|\bm{x}-\bm{m}|^2t^{-2}.\)
\begin{align*}
    |\tilde{r}(\bm{x},\bm{x}_0,t;\tilde{p})|^2\leq & (C_0+C_1t^{-1}+C_2t^{-2})+(C_3t^{-1}+C_4t^{-2})|\bm{x}-\bm{m}|\\
    &+(C_5t^{-1}+C_6t^{-2}+C_7t^{-3})|\bm{x}-\bm{m}|^2+(C_8t^{-2}+C_9t^{-3})|\bm{x}-\bm{m}|^3\\
    &+(C_{10}t^{-2}+C_{11}t^{-3}+C_{12}t^{-4})|\bm{x}-\bm{m}|^4.
\end{align*}
Notice that \(\left(\mathbb{E}_{\bm{X}\sim\mathcal{N}\left(\bm{x};\bm{m},\bm{\Sigma}\right)}\left[|\bm{X}-\bm{m}|\right]\right)^2\leq \mathbb{E}_{\bm{X}\sim\mathcal{N}\left(\bm{x};\bm{m},\bm{\Sigma}\right)}\left[|\bm{X}-\bm{m}|^2\right]\).\\
\(\mathbb{E}_{\bm{X}\sim\mathcal{N}\left(\bm{x};\bm{m},\bm{\Sigma}\right)}\left[|\bm{X}-\bm{m}|^2\right]=\text{tr}(\bm{\Sigma})\), \(\mathbb{E}_{\bm{X}\sim\mathcal{N}\left(\bm{x};\bm{m},\bm{\Sigma}\right)}\left[|\bm{X}-\bm{m}|^4\right]=\text{tr}^2(\bm{\Sigma})+2\text{tr}(\bm{\Sigma}^2)\), \(\mathbb{E}_{\bm{X}\sim\mathcal{N}\left(\bm{x};\bm{m},\bm{\Sigma}\right)}\left[|\bm{X}-\bm{m}|^6\right]=\text{tr}^3(\bm{\Sigma})+6\text{tr}(\bm{\Sigma})\text{tr}(\bm{\Sigma}^2)+8\text{tr}(\bm{\Sigma}^3)\).
Thus, \vspace{-3pt}
\begin{align*}
    &\int_{\mathbb{R}^d}|r(\bm{x},\bm{x}_0, t;\tilde{p})|^2\mathrm{d}\bm{x}
    =(4\pi)^{-\frac{d}{2}}(\det(\bm{\Sigma}))^{-\frac{1}{2}}\mathbb{E}_{\bm{X}\sim\mathcal{N}\left(\bm{x};\bm{m},\frac{\bm{\Sigma}}{2}\right)}\left[|\tilde{r}(\bm{x},\bm{x}_0, t;\tilde{p})|^2\right]\\
    \leq&(4\pi)^{-\frac{d}{2}}(\det(\bm{\Sigma}))^{-\frac{1}{2}}\left(C_0+C_1t^{-1}+C_2t^{-2}+(C_3t^{-1}+C_4t^{-2})\sqrt{\frac{\text{tr}(\bm{\Sigma})}{2}}\right.\\
    &\left.+\frac{1}{2}(C_5t^{-1}+C_6t^{-2}+C_7t^{-3})\text{tr}(\bm{\Sigma})+(C_8t^{-2}+C_9t^{-3})\sqrt{\frac{1}{8}\text{tr}^3(\bm{\Sigma})+\frac{3}{4}\text{tr}(\bm{\Sigma})\text{tr}(\bm{\Sigma}^2)+\text{tr}(\bm{\Sigma}^3)}\right.\\
    &\left.+(C_{10}t^{-2}+C_{11}t^{-3}+C_{12}t^{-4})\left(\frac{1}{4}\text{tr}^2(\bm{\Sigma})+\frac{1}{2}\text{tr}(\bm{\Sigma}^2)\right)\right)=\mathcal{O}(t^{-d/2-2}) \text{ as } t\to0.
\end{align*}
Similarly, \vspace{-3pt}
\begin{align*}
     &\int_{\mathbb{R}^d}|r(\bm{x},\bm{x}_0, t;\tilde{p})|^2\tilde{p}\mathrm{d}\bm{x}
    =(2\sqrt{3}\pi)^{-d}(\det(\bm{\Sigma}))^{-1}\mathbb{E}_{\bm{X}\sim\mathcal{N}\left(\bm{x};\bm{m},\frac{\bm{\Sigma}}{3}\right)}\left[|\tilde{r}(\bm{x},\bm{x}_0, t;\tilde{p})|^2\right]\\
    \leq &(2\sqrt{3}\pi)^{-d}(\det(\bm{\Sigma}))^{-1}\left(C_0+C_1t^{-1}+C_2t^{-2}+(C_3t^{-1}+C_4t^{-2})\sqrt{\frac{\text{tr}(\bm{\Sigma})}{3}}\right.\\
    &\left.+\frac{1}{3}(C_5t^{-1}+C_6t^{-2}+C_7t^{-3})\text{tr}(\bm{\Sigma})+(C_8t^{-2}+C_9t^{-3})\sqrt{\frac{1}{27}\text{tr}^3(\bm{\Sigma})+\frac{2}{9}\text{tr}(\bm{\Sigma})\text{tr}(\bm{\Sigma}^2)+\frac{8}{27}\text{tr}(\bm{\Sigma}^3)}\right.\\
    &\left.+(C_{10}t^{-2}+C_{11}t^{-3}+C_{12}t^{-4})\left(\frac{1}{9}\text{tr}^2(\bm{\Sigma})+\frac{2}{9}\text{tr}(\bm{\Sigma}^2)\right)\right)=\mathcal{O}(t^{-(d+2)}) \text{ as } t\to0.
\end{align*}
\end{proof}

\bibliographystyle{siamplain}
\bibliography{references}

\end{document}